\documentclass[conference]{IEEEtran}
\IEEEoverridecommandlockouts

\usepackage[utf8]{inputenc}
\usepackage[T1]{fontenc}
\usepackage{amsmath,amssymb,amsfonts}
\usepackage{amsthm}
\usepackage{xspace}
\usepackage[dvipsnames,svgnames,x11names]{xcolor}
\usepackage{graphicx}
\usepackage{booktabs}
\usepackage{multirow}
\usepackage{array}
\usepackage{tabularx}
\usepackage[shortlabels]{enumitem}
\usepackage{caption}
\usepackage{subcaption}
\usepackage{float}
\usepackage[expansion=false]{microtype}
\usepackage{placeins}
\usepackage{url}
\usepackage{tikz}
\usetikzlibrary{shapes.geometric,arrows.meta,positioning,
                fit,backgrounds,decorations.pathreplacing}
\usepackage{algorithm}
\usepackage{algpseudocode}
\usepackage{mdframed}
\usepackage[most]{tcolorbox}
\usepackage{cuted}      
\makeatletter
\providecommand\@setmarks{}
\makeatother
\tcbuselibrary{listings,breakable,skins}
\usepackage[numbers,sort&compress]{natbib}
\usepackage[hidelinks,colorlinks=true,linkcolor=NavyBlue,
            citecolor=NavyBlue,urlcolor=NavyBlue]{hyperref}
\usepackage{xcolor}
\usepackage{hyperref}

\theoremstyle{definition}
\newtheorem{definition}{Definition}
\newtheorem{proposition}[definition]{Proposition}
\theoremstyle{remark}

\newtcolorbox{promptbox}{
  colback=gray!5,
  colframe=black,
  boxrule=0.5pt,
  arc=3pt,
  left=4pt, right=4pt, top=4pt, bottom=4pt,
  fontupper=\scriptsize\ttfamily,
  breakable,
}

\newmdenv[
  linecolor=gray!40,
  backgroundcolor=gray!6,
  roundcorner=4pt,
  innertopmargin=8pt,
  innerbottommargin=8pt,
  innerleftmargin=10pt,
  innerrightmargin=10pt,
  skipabove=6pt,
  skipbelow=6pt,
]{placeholder}

\newcommand{\order}{\textsc{Order}\xspace}
\newcommand{\orderworld}{\textsc{Order-World}\xspace}
\newcommand{\orderbench}{\textsc{Order-Bench}\xspace}
\newcommand{\orderspatial}{\textsc{Order-Spatial}\xspace}
\newcommand{\dkl}{\textsc{DKL}\xspace}

\newcommand{\klora}{K-LoRA\xspace}
\newcommand{\slora}{S-LoRA\xspace}
\newcommand{\svr}{\textsc{svr}\xspace}
\newcommand{\sdhr}{\textsc{sdhr}\xspace}
\newcommand{\parcmd}{\textsc{par}\xspace}

\newcommand{\poset}{\mathcal{P}}
\newcommand{\objects}{\mathcal{O}}
\newcommand{\config}{\mathcal{C}}
\newcommand{\corpus}{\mathcal{D}}
\newcommand{\score}{f_{\mathrm{crit}}}
\newcommand{\HA}{H_{\mathrm{A}}}
\newcommand{\HB}{H_{\mathrm{B}}}
\newcommand{\tauorder}{\ensuremath{\tau_{\mathrm{order}}}\xspace}
\newcommand{\tauorders}{\ensuremath{\tau^{s}_{\mathrm{order}}}\xspace}
\newcommand{\tauorderu}{\ensuremath{\tau^{u}_{\mathrm{order}}}\xspace}

\begin{document}

\title{\order: A Fictitious-World Benchmark for Domain-Adaptive Embodied AI}

\author{\IEEEauthorblockN{%
Sai Krishna Reddy Sathi\,
\href{https://orcid.org/0009-0005-4136-3005}{%
\textcolor{OliveGreen}{\textsuperscript{\textbf{iD}}}}
\quad
Anuj Tiwari\,
\href{https://orcid.org/0000-0003-3685-8888}{%
\textcolor{OliveGreen}{\textsuperscript{\textbf{iD}}}}
}
\IEEEauthorblockA{Indian Institute of Technology Madras, India\\
\texttt{me21b181@smail.iitm.ac.in}}}

\maketitle

\begin{abstract}
Adapting language models to new domains via continual pre-training raises a
basic evaluation problem: if the training corpus overlaps with what the model
already knows, performance gains cannot be cleanly attributed to new learning
rather than pre-existing knowledge. This matters most for knowledge-intensive,
task-light (KHTL) robot deployments -- pharmaceutical dispensing, hazardous-material
handling, facility-specific protocols, where the physical task is simple but
the governing rules are proprietary and safety-critical, and where extensive
live testing is costly or unsafe. We introduce \order (\textbf{O}ntology-driven \textbf{D}ecision-making for \textbf{E}mbodied
\textbf{R}easoning), a benchmark built on a
fictitious world: a 342{,}069-token synthetic corpus defining a self-consistent
physics that cannot appear in any model's pre-training data. \order pairs a
500-question knowledge test (\orderbench) with a harder compositional task,
\orderspatial: ordering objects for safe manipulation across both familiar and
entirely novel scenes. GPT-4.1 without adaptation scores \emph{below chance} on
\orderspatial ($\tauorder=0.441$), showing its priors actively conflict with
the invented physics. After continual pre-training, small models improve
substantially on both familiar and novel scenes alike evidence of genuine
world-model induction rather than memorization. We then carry this through to a robot pipeline: models that answer the knowledge test well often cannot
produce valid, executable plans without a further skill-adaptation stage,
after which small, fully offline models outperform GPT-4.1 even when GPT-4.1
is given retrieval access to the same rules ($\tauorder=0.848$ vs.\ $0.606$), on a full perception-to-execution loop demonstrated on a simulated
\texttt{iiwa7} arm with human-in-the-loop correction. Throughout,
\orderspatial performance, not knowledge-test accuracy is what predicts
real plan quality. \order will be released publicly as a reusable testbed for evaluating knowledge acquisition in domain-adapted robotic systems.
\end{abstract}

\begin{IEEEkeywords}
Benchmarks for robot learning, contamination-free evaluation, embodied AI,
domain adaptation, small language models, knowledge ingestion, behavior trees,
interpretable manipulation, compositional generalisation, systematicity debate.
\end{IEEEkeywords}

\section{Introduction}
\label{sec:intro}

Robots deployed in high-stakes domains face a paradox: the task can be
physically narrow, but the required knowledge is vast, domain-specific, and
entirely absent from public corpora. A pharmaceutical dispensing robot must
understand drug-sequence constraints from temperature-sensitivity windows; a
nuclear material handler must apply prioritisation rules derived from
classified decay-rate data; a semiconductor fabrication robot must follow
facility-specific contamination sequencing protocols. In every case the
physical task is pick-and-place on a structured workspace, but the governing
knowledge is proprietary. We call this class of deployments
\emph{Knowledge-intensive, Task-light} (KHTL) systems.
Figure~\ref{fig:master} situates the problem and previews the full pipeline
this paper validates.

Adapting a foundation small language model (SLM) as the cognitive module of a
KHTL robot via continual learning is
natural~\citep{ke2022continual,wang2024comprehensive,luo2023empirical}, but
immediately raises an evaluation question existing frameworks cannot answer
cleanly: has the model genuinely internalised the new domain physics, or is it
merely pattern-matching against pre-existing parametric knowledge?
World-model theory~\citep{lecun2022path} frames this precisely: intelligent
systems build understanding over an abstract layer of constructed concepts
that make observations predictable. We use \emph{world-model induction} to
denote exactly this outcome - a model that has internalised the abstract
layer of a domain and therefore applies its constructed concepts to
configurations it has never seen - as opposed to \emph{memorisation}, in
which only the surface form of training instances is retained. A model that
has genuinely learned a KHTL domain has internalised the abstract layer; one
that has merely memorised surface patterns has not. Distinguishing the two
empirically is the purpose of \order.

\subsection{The contamination problem}
If the knowledge-ingestion corpus overlaps with pre-training data, any
benchmark improvement is ambiguous~\citep{deng2024investigating,mallen2023tails,
zhu2023physics31}: it may reflect genuine knowledge acquisition or retrieval of
pre-existing parametric memory. In NLP this motivated synthetic corpora built
around fictitious entities such as synthetic biographies and fabricated
chemical compounds~\citep{zhu2023physics31,allen2023physlm,liu2024fictitious}.
No analogous contamination-free evaluation framework exists for embodied AI,
and \order is designed to fill that gap.

\subsection{\order in brief}
\order takes perceptually grounded, visually familiar primitives and constructs
a fresh abstract physics over them: a complete fictitious world defining how
objects interact, what properties they carry, and how combinations behave. The
corpus is entirely novel (e.g.\ the ``Rubescent Decay Field'' governing red
objects, ``null-reactive tensors'' of white objects), so no frontier model's
pre-training corpus can represent it, and its priors may actively conflict with
it. The \orderspatial seen/unseen split operationalises the classical
systematicity debate~\citep{fodor1988connectionism,lake2018scan}: does
continual pre-training (CPT) induce genuine world-model induction ($\HB$), or
surface memorisation ($\HA$)?

\subsection{From evaluation to deployment}
A benchmark for KHTL robotics is only as valuable as the deployment decisions
it supports. We therefore carry the validated models through the
\emph{operational} question as well: given a model whose domain adaptation has
been shown contamination-free, how do we construct a reliable pipeline from
internalised knowledge to physically deployable plans? Two separations
structure the answer. First, domain knowledge and the skill of Behavior Tree
(BT) generation are separable capabilities: knowledge-ingested (KI) models,
despite strong \orderbench scores, produce schema-valid BTs at rates near zero
without task-specific fine-tuning, motivating skill fine-tuning via \slora on
424 paired (NL-query, gold-BT) examples. Second, injecting a full structured
ontology context into a fine-tuned SLM causes \emph{context hijacking}: the
retrieved chunks re-activate the analytical-prose mode that CPT rewarded and
the model abandons XML output entirely, collapsing schema validity to $0\%$.
A two-stage retrieval pipeline with a condensed priority-reasoning call, and
optionally a constrained-start XML prefix, recovers functional outputs; full
grammar-constrained decoding at every step, by contrast, degrades BT quality.

\begin{strip}
  \centering
  \includegraphics[
    width=0.85\textwidth
  ]{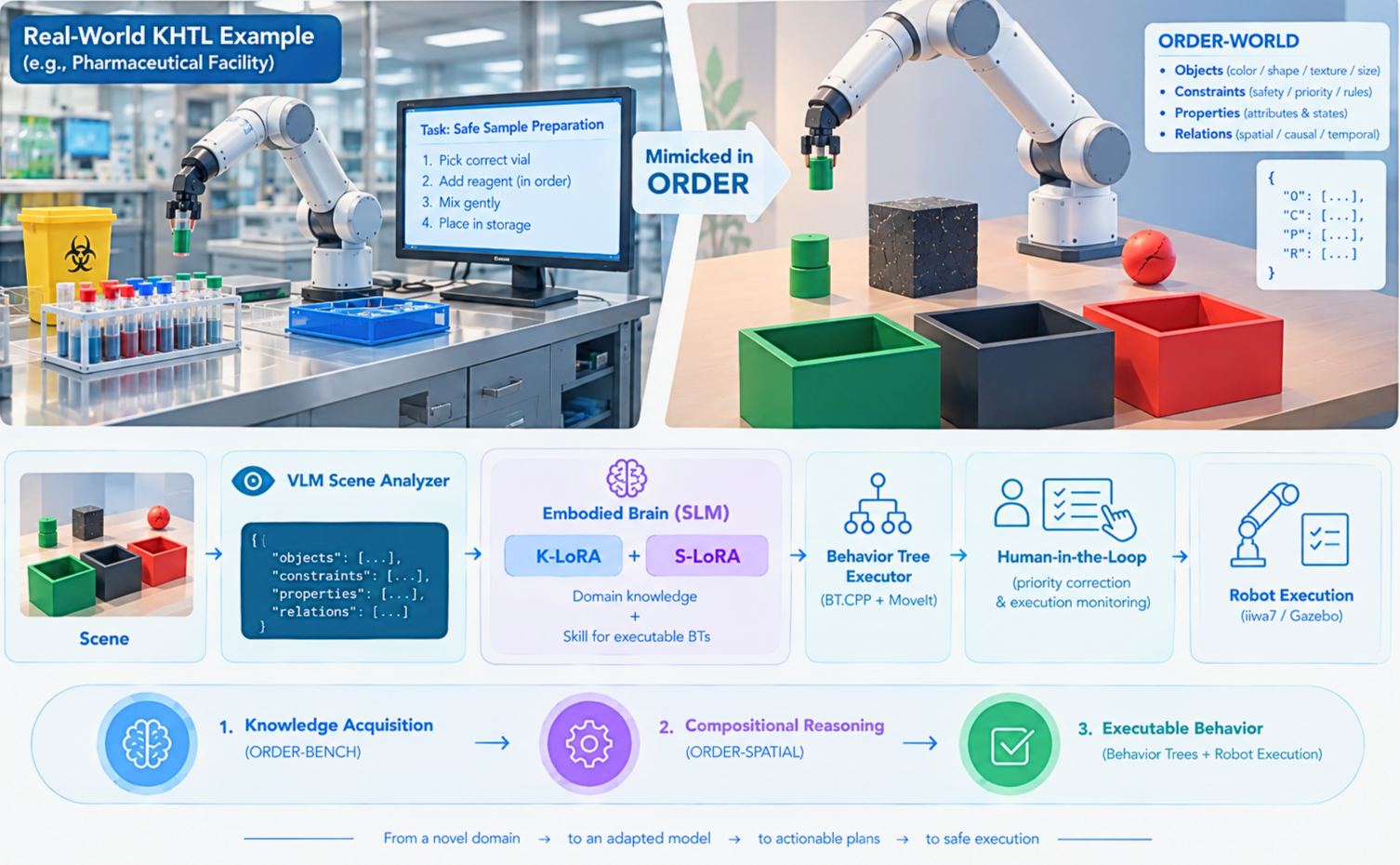}
  \captionof{figure}{%
    \textbf{Why \order, and what it is for.}
      A knowledge-intensive, task-light (\textsc{khtl}) deployment is physically
      simple but epistemically hard: the governing knowledge is proprietary and
      absent from pre-training data, so improvements measured on any
      in-domain benchmark are ambiguous between knowledge acquisition and
      parametric recall. \order replaces the proprietary domain with a
      contamination-free fictitious physics that preserves the
      \emph{structural} properties which make the real problem hard, evaluates
      whether a candidate continual-learning pipeline has genuinely internalised
      it, and then carries the validated model through skill fine-tuning to
      executable, human-correctable Behavior Trees running on a manipulator.
      \\
      \quad{\scriptsize\emph{*Parts of this figure were created with AI assistance for visualization.}}
  }
  \label{fig:master}
\end{strip}

\subsection{Central findings}
RAG nearly saturates all non-spatial \orderbench categories ($\geq\!90\%$),
confirming corpus coherence. Compositional spatial priority ordering remains
persistently hard: GPT-4.1 reaches $86\%$ with retrieval while SLMs stall at
$51$--$72\%$; the bottleneck is reasoning capacity, not retrieval
quality~\citep{yang2026fine,mallen2023not}. On the free-form \orderspatial
task, GPT-4.1 \emph{without} domain adaptation scores $\tauorder=0.441$, below
the $0.500$ random baseline, its pre-training priors conflicting with the
fictitious physics. Parametric CPT internalises the physics into weights,
enabling compositional application at inference without in-context rule
integration~\citep{liu2024lost}. Downstream, closed-book Domain-BT-LMs reach
$\tauorder = 0.848$ against $0.606$ for GPT-4.1 with the same structured
retrieval pipeline, and \orderspatial concordance - not MCQ aggregate -
is what anticipates that ranking.

\subsection{Contributions}
\begin{enumerate}[leftmargin=1.4em,itemsep=1pt,topsep=2pt,label=(\roman*)]
  \item \textbf{\orderworld}: a 342k-token six-layer synthetic corpus with a
    provably contamination-free physics.
  \item \textbf{\orderbench MCQ}: a 500-question Bloom-weighted benchmark with
    formal gold-label uniqueness guarantees.
  \item \textbf{\orderspatial}: a seen/unseen spatial priority task with three
    concordance-based metrics, directly operationalising the $\HA$/$\HB$ axis.
  \item \textbf{RAG dissociation}: retrieval saturates factual recall but
    cannot support compositional priority ordering, establishing parametric
    CPT as necessary for KHTL tasks.
  \item \textbf{World-model induction evidence}: best-adapted SLMs show
    $+0.18$--$+0.26$ \tauorder on \emph{both} seen and unseen splits, with
    seen/unseen gaps $\leq\!0.05$ and 2 of 3 best variants showing an inverted
    gap.
  \item \textbf{A two-stage adaptation pipeline} (\klora $+$ \slora) with
    empirical validation of knowledge--skill disentanglement across three SLM
    families and eight knowledge-ingestion variants.
  \item \textbf{Characterisation of context hijacking} in fine-tuned SLMs, and
    two structured retrieval pipelines that recover functional outputs while
    preserving the model's fine-tuned generation distribution.
  \item \textbf{Empirical closure of the knowledge-to-action chain}: a HITL
    framework deployed on a simulated \texttt{iiwa7} arm, where
    single-correction re-prompting reliably produces compilable, correctly
    ordered trees, and where \orderspatial concordance is confirmed as a
    stronger predictor of BT quality than MCQ aggregate.
\end{enumerate}

\section{Related Work}
\label{sec:related}

\subsection{Continual learning for domain adaptation}
CPT has demonstrated effectiveness in scientific, medical, and financial
domains~\citep{ke2022continual,luo2023empirical,gururangan2020domain,
shi2024continual}. Representing the same knowledge in multiple surface formats
substantially improves downstream extraction~\citep{allen2023physlm}, and
corpus structure is a primary determinant of continual-learning
gains~\citep{jiang2025mixcpt,liu2024scpt}. The contamination problem has been
acknowledged in NLP~\citep{zhu2023physics31,mallen2023tails} but no analogous
framework has been proposed for embodied AI.

\subsection{Opacity of end-to-end policies}
Vision-Language-Action (VLA) models such as RT-2, OpenVLA, and $\pi_0$ achieve
impressive task generalisation by jointly training perception, language, and
control within end-to-end neural
policies~\citep{brohan2023rt2,kim2024openvla,driess2023palme,black2024pi0,
ahn2022saycan}. Yet these models remain fundamentally opaque: they offer no
principled account of why a particular action is selected, making failure
prediction and post-hoc auditing intractable in safety-critical settings.
Recent work on VLA safety has introduced constrained reinforcement learning via
CMDPs to reduce safety violations~\citep{zhang2025safevla}, while mechanistic
steering approaches modulate internal activations at
inference~\citep{haon2025mechanistic}. Despite these advances, current VLAs
remain learned-constraint systems rather than formally verifiable ones,
limiting their applicability where human-readable, modifiable reasoning is
essential~\citep{kawaharazuka2025vision,ma2024vlasurvey}.

\subsection{Structured plan representations: Behavior Trees}
Classical manipulation pipelines rely on Finite State Machines (FSMs),
symbolic planners, and hierarchical controllers. FSM-based
frameworks~\citep{bohren2010smach} offer deterministic execution but suffer
from poor scalability and modularity. Behavior Trees (BTs) address these
limitations through a rooted directed tree in which periodic ticks propagate
from the root, with nodes returning Success, Failure, or
Running~\citep{colledanchise2018bt,iovino2022survey}. Compared to FSMs, BTs
offer improved reactivity, modularity, and graphical readability, with explicit
success--failure conditions that make decision logic transparent to human
operators - precisely the properties required in safety-critical and
collaborative robotics. The \textsc{BehaviorTree.CPP} execution framework
provides a mature runtime for deploying BTs on physical systems.

\subsection{LLM-based Behavior Tree generation}
Early language-planning approaches integrate LLMs with symbolic planners via
PDDL translation~\citep{liu2023llmp} or decompose instructions into sequential
steps~\citep{cao2023phasestep}. LLM-BRAin fine-tunes a transformer to generate
BTs from natural language using a predefined node
library~\citep{lykov2024llmbrain}; BTGenBot trains lightweight LLMs on
open-source BTs to produce executable XML~\citep{izzo2024btgenbot}.
LLM-as-BT-Planner and LLM-BT employ in-context learning and fine-tuning for
manipulation and assembly tasks~\citep{ao2025llmbt,zhou2024llmbt}. In social
robotics, integrated frameworks combine LLM-driven tree modification with BT
execution monitoring for runtime adaptation~\citep{merino2025behavior}. Across
all these approaches two limitations persist: successful implementations rely
on large proprietary models, and BT generation is treated as unconstrained text
synthesis, producing syntactically invalid trees that fail to compile on
edge-deployed SLMs~\citep{polimi_bt_thesis,zhang2023don}. No existing approach
addresses the KHTL knowledge-ingestion evaluation problem or provides a
contamination-free validation environment.

\subsection{Compositional generalisation}
The systematicity argument dates to~\cite{fodor1988connectionism};
SCAN~\citep{lake2018scan} operationalised it for sequence transduction. Modern
LLMs largely succeed on SCAN because its rules appear in natural-language
pre-training~\citep{drozdov2023compositional}. \order poses the harder
question: systematic generalisation over a physics \emph{provably} absent from
pre-training, under axiomatic constraints, with unique action-grounded outputs
that must ultimately compile and run on a manipulator.

\section{The \order Framework}
\label{sec:framework}

\begin{figure*}[t]
  \centering
  \begin{tikzpicture}[
    font=\footnotesize,
    corpusbox/.style={rectangle, rounded corners=4pt,
      draw=NavyBlue, fill=NavyBlue!9, text width=2.7cm,
      align=center, minimum height=0.9cm, inner sep=5pt},
    pipelinebox/.style={rectangle, rounded corners=4pt,
      draw=Purple, fill=Purple!9, text width=2.6cm,
      align=center, minimum height=0.9cm, inner sep=5pt},
    benchbox/.style={rectangle, rounded corners=4pt,
      draw=OliveGreen, fill=OliveGreen!9, text width=2.7cm,
      align=center, minimum height=0.9cm, inner sep=5pt},
    carrow/.style={-Stealth,thick,NavyBlue, shorten >=3pt, shorten <=3pt},
    parrow/.style={-Stealth,thick,Purple, shorten >=3pt, shorten <=3pt},
    barrow/.style={-Stealth,thick,OliveGreen,dashed, shorten >=3pt, shorten <=3pt},
    collabel/.style={font=\footnotesize\bfseries}
  ]
  \node[corpusbox, fill=BrickRed!12, draw=BrickRed] (ont) at (0,0)
    {\textbf{(0) Master Ontology}\\[1pt]
     \scriptsize contract $\cdot$ primitives\\ intrinsics $\cdot$ pairwise\\
     higher-order regimes};
  \node[corpusbox](ax) at(0,-1.7)
    {\textbf{(1) Axioms}\\[1pt]\scriptsize conservation laws\\dominance rules};
  \node[corpusbox](cf) at(0,-3.3)
    {\textbf{(2) Counterfactuals}\\[1pt]\scriptsize one violated constraint\\per entry};
  \node[corpusbox, fill=DarkGreen!10, draw=OliveGreen](sc) at(0,-5.0)
    {\textbf{(4) Spatial Configs}\\[1pt]\scriptsize 70 CPT scenes with\\analysis, $+$ $\pi^*$};
  \node[corpusbox](qp) at(0,-6.8)
    {\textbf{(3) QnA $\cdot$ (5) Paraphrases}\\[1pt]\scriptsize same knowledge,\\varied surface form};
  \draw[carrow](ont)--(ax);\draw[carrow](ax)--(cf);
  \draw[carrow](cf)--(sc);\draw[carrow](sc)--(qp);
  \node[collabel, NavyBlue, above=0.18cm of ont]{\orderworld corpus};
  \draw[decorate,decoration={brace,amplitude=5pt,mirror}, thick, BrickRed]
    (2.05,0.45)--(2.05,-7.15)
    node[midway,right=3pt,font=\scriptsize,BrickRed,align=left]
    {CPT input\\(\texttt{text} field)};
  \node[pipelinebox](base) at(6.6,0)
    {\textbf{Base SLM}\\[1pt]\scriptsize Mistral-7B\\LLaMA-3.1-8B\\Qwen3-4B};
  \node[pipelinebox](cpt) at(6.6,-2.2)
    {\textbf{CPT via LoRA}\\[1pt]\scriptsize knowledge adapter\\\emph{(K-LoRA)} on $\corpus$};
  \node[pipelinebox](merge) at(6.6,-4.4)
    {\textbf{Task Arithmetic Merge}\\[1pt]\scriptsize K-LoRA $+$ instruct\\$\Rightarrow$ KI-SLM};
  \node[pipelinebox, fill=Purple!16](sfine) at(6.6,-6.6)
    {\textbf{Skill SFT (S-LoRA)}\\[1pt]\scriptsize $\Rightarrow$ Domain-BT-LM};
  \draw[parrow](base)--(cpt);\draw[parrow](cpt)--(merge);
  \draw[parrow](merge)--(sfine);
  \node[collabel, Purple, above=0.18cm of base]{Adaptation pipeline};
  \node[benchbox](b1) at(12.6,0)
    {\textbf{MCQ: 7 Categories}\\[1pt]
     \scriptsize Intr.$\cdot$QnA$\cdot$Ax.$\cdot$Pair.\\
     CF$\cdot$H.O.$\cdot$\textbf{\textcolor{BrickRed}{Spatial}}};
  \node[benchbox](b2) at(12.6,-2.2)
    {\textbf{Bloom Weights}\\[1pt]\scriptsize $1.0\!\times$--$3.0\!\times$\\
     Agg.\ Score (Eq.\,\ref{eq:agg})};
  \node[benchbox, fill=BrickRed!10, draw=BrickRed](split) at(12.6,-4.4)
    {\textbf{\orderspatial}\\[1pt]\scriptsize 70 seen / 183 unseen\\$\HA$ vs $\HB$ axis};
  \node[benchbox, fill=NavyBlue!12, draw=NavyBlue](metrics) at(12.6,-6.6)
    {\textbf{Metrics}\\[1pt]\scriptsize MCQ Agg.$\cdot$PPL$\cdot$PAR\\
     \tauorder$\cdot$\svr$\cdot$\sdhr};
  \draw[barrow](b1)--(b2);\draw[barrow](b2)--(split);
  \draw[barrow](split)--(metrics);
  \node[collabel,OliveGreen,above=0.18cm of b1]{\orderbench};
  \draw[-Stealth,thick,BrickRed,dashed,shorten >=2pt]
    (sfine.east)--++(0.5,0)|-(metrics.west);
  \end{tikzpicture}
  \caption{%
    The \order framework.
    \textbf{Left:} the \orderworld knowledge pyramid; all upstream context is
    injected into downstream generation calls.
    \textbf{Centre:} the two-stage adaptation pipeline, instantiated across
    seven \klora variants for three SLM families and then extended with
    \slora skill fine-tuning to produce Domain-BT-LMs
    (Sections~\ref{sec:arch}--\ref{sec:slora}).
    \textbf{Right:} \orderbench, comprising the MCQ component and the
    \orderspatial seen/unseen ordering task, with the metric suite of
    Section~\ref{sec:metrics}.}
  \label{fig:framework}
\end{figure*}

\subsection{Visual grammar and primitive design}
\label{sec:grammar}

\orderworld represents its world through a \emph{visual grammar} of primitives:
\begin{equation}
\small
\mathcal{G}
=
\underbrace{\mathcal{C}}_{\text{9 colours}}
\times
\underbrace{\mathcal{S}}_{\text{5 shapes}}
\times
\underbrace{\mathcal{Z}}_{\text{3 sizes}}
\times
\underbrace{\mathcal{T}}_{\text{3 textures}}
\times
\underbrace{\mathcal{R}}_{\text{5 relations}},
\label{eq:grammar}
\end{equation}
yielding
\[
|\mathcal{G}| = 9 \times 5 \times 3 \times 3 \times 5 = 2{,}025
\]
distinct primitive configurations. The corresponding single-object descriptor
space, excluding relations, contains
$9 \times 5 \times 3 \times 3 = 405$ distinct object signatures, while the
five relation types govern pairwise interactions between objects. Separately,
when composing multi-object scenes from the 25 primitive tokens
($9 + 5 + 3 + 3 + 5 = 25$), there are
$2^{25} - 1 = 33{,}554{,}431$ possible non-empty subsets of primitives.
Thus, the $2^{25} - 1$ quantity characterizes the \emph{scene-composition
space}, and is distinct from the single-primitive space
$|\mathcal{G}| = 2{,}025$ defined in Eq.~\ref{eq:grammar}. Primitive tokens
are visually grounded and identifiable by off-the-shelf vision-language
models (VLMs), while all interaction semantics are fictional and absent
from any pre-training corpus, enabling a clean perception--cognition
separation that Section~\ref{sec:perception} exploits.

\subsection{\orderworld corpus architecture}
\label{sec:corpus}

\orderworld is a six-layer knowledge pyramid
(Figure~\ref{fig:framework}, Table~\ref{tab:corpus}), constructed so that every
downstream generation call receives all upstream layers as injected context via
the GPT-4.1 oracle, ensuring global closed-world consistency throughout.
\textbf{Layer~0} (Master Ontology) forms the physics foundation: primitive
vocabulary, intrinsic semantics, pairwise interaction laws, and higher-order
emergent regimes. \textbf{Layers~1--2} (Axioms, Counterfactuals) formalise
global constraints as logic rules and stress-test them via single-constraint
violations. \textbf{Layers~3 and~5} (QnA, Paraphrases) re-express the same
physics in varied surface formats following the multi-format CPT principle
of~\cite{allen2023physlm}. The epistemically richest layer is
\textbf{Layer~4} (Spatial Configurations): 70 CPT scenes, each presenting
$n\!\in\![2,5]$ objects with full inter-/intra-object interaction analysis,
ontology and axiom citations, and a gold priority order~$\pi^{*}$. The extended
253-scene pool covers CPT training, skill fine-tuning
(Section~\ref{sec:slora}), and \orderspatial evaluation. Full generation
specifications, including all system prompts, appear in
Appendix~\ref{app:generation}.

\begin{table}[t]
\centering
\caption{\orderworld token distribution ($\corpus = 342{,}069$ tokens).}
\label{tab:corpus}
\setlength{\tabcolsep}{4pt}
\renewcommand{\arraystretch}{1.05}
\footnotesize
\begin{tabular}{@{}lrl@{}}
\toprule
\textbf{Section} & \textbf{Tokens} & \textbf{Role} \\
\midrule
Master Ontology (0) & 67,389  & Physics foundation \\
Axioms (1)          & 11,886  & Global constraints \\
Counterfactuals (2) & 16,336  & Constraint stress-tests \\
QnA (3)             & 47,836  & Format diversification \\
Spatial Configs (4) & 100,214 & Scenes \& priority orders \\
Paraphrases (5)     & 98,408  & Format diversification \\
\midrule
\textbf{Total} & \textbf{342,069} & CPT corpus $\corpus$ \\
\bottomrule
\end{tabular}
\end{table}

\section{\orderbench, \orderspatial, and Evaluation Metrics}
\label{sec:benchmark}

\subsection{MCQ component: knowledge extraction fidelity}
\label{sec:mcq}

\orderbench comprises 500 four-option multiple-choice questions (MCQ)
measuring \emph{knowledge extraction fidelity}: how completely and accurately a
model can recover and apply the domain physics from $\corpus$. Every question
has a gold answer directly recoverable from corpus text; compositional
generalisation to novel configurations is evaluated separately by
\orderspatial. Questions span seven categories derived from every corpus layer
(Table~\ref{tab:categories}); per-category distractor design is specified in
Appendix~\ref{app:gen:mcq}.

\begin{table}[t]
\centering
\caption{\orderbench MCQ composition and Bloom weights. Total weight mass:
1127.5; random baseline: 25\%.}
\label{tab:categories}
\setlength{\tabcolsep}{5pt}
\renewcommand{\arraystretch}{1.05}
\footnotesize
\begin{tabular}{@{}lccc@{}}
\toprule
\textbf{Category} & $N$ & $w_c$ & $w_cN$ \\
\midrule
Intrinsic Semantics & 10  & 1.0 & 10.0 \\
QnA                 & 30  & 1.0 & 30.0 \\
Axioms              & 60  & 1.5 & 90.0 \\
Spatial (MCQ)       & 72  & 2.0 & 144.0 \\
Pairwise            & 150 & 2.5 & 375.0 \\
Counterfactuals     & 111 & 2.5 & 277.5 \\
Higher-Order        & 67  & 3.0 & 201.0 \\
\midrule
\textbf{Total} & \textbf{500} & & \textbf{1127.5} \\
\bottomrule
\end{tabular}
\end{table}

\subsection{\orderspatial: seen/unseen priority ordering}
\label{sec:orderspatial}

\orderspatial is the more discriminative evaluation component. The model
receives a full scene description in the visual grammar and must output the
complete safe manipulation sequence $\hat{\pi}$ for all $n$ objects. The gold
sequence $\pi^*$ is unique by Proposition~\ref{prop:total_order}
(Appendix~\ref{app:order_theory}): objects are ranked by a domain-induced
criticality score combining intrinsic and pairwise contributions, with a
deterministic lexicographic tiebreaker (Colour $\succ$ Shape $\succ$ Size
$\succ$ Texture) ensuring totality.

\textbf{Scene pool.} The 253-scene pool is partitioned into 70 \emph{seen}
scenes (full text appeared in $\corpus$) and 183 \emph{unseen} scenes never
part of any training data. Welch $t$-tests across all structural and physics
complexity axes confirm neither split is significantly harder (all $|t|<2$,
$p>0.05$; Table~\ref{tab:complexity_distribution},
Appendix~\ref{app:complexity}); the mild four-object skew in unseen ($30.6\%$
vs.\ $18.6\%$ seen) slightly disfavours unseen performance, strengthening the
$\HB$ interpretation.

\textbf{The $\HA$/$\HB$ axis.} Under $\HA$ (memorisation), seen performance
improves but unseen stagnates
($\Delta\tauorder^{\mathrm{seen}} \gg \Delta\tauorder^{\mathrm{unseen}}$).
Under $\HB$ (world-model induction), \emph{both} seen and unseen improve
substantially and comparably
($\Delta\tauorder^{\mathrm{seen}} \approx \Delta\tauorder^{\mathrm{unseen}}$).

\subsection{Evaluation metrics}
\label{sec:metrics}

All metrics used in this paper are defined here so that every subsequent table
is readable without forward reference.

\textbf{Bloom-weighted MCQ aggregate ($S_{\mathrm{agg}}$).}
\begin{equation}
  S_{\mathrm{agg}} = \frac{\sum_c w_c \cdot \mathrm{Acc}_c}{\sum_c w_c},
  \label{eq:agg}
\end{equation}
with per-category weights $w_c$ from Table~\ref{tab:categories} reflecting
reasoning depth; random guessing yields $S_{\mathrm{agg}} = 25\%$.

\textbf{Perplexity (PPL).} Token-level perplexity on a held-out split of
$\corpus$, used only as a training-monitoring signal; Section~\ref{sec:ppl_diss}
shows it is not a reliable proxy for physics internalisation.

\textbf{Normalised Kendall concordance (\tauorder).} The primary metric
throughout. It measures the fraction of object pairs in a predicted order
$\hat{\pi}$ that are correctly ranked relative to $\pi^*$:
\begin{equation}
  \tauorder \;=\; \frac{C}{C+D} \;=\; \frac{\tau_b+1}{2} \;\in\; [0,1],
  \label{eq:tau}
\end{equation}
where $C$ and $D$ are the concordant and discordant pair counts and $\tau_b$ is
Kendall's Tau. $\tauorder = 0.500$ is random and $1.000$ is perfect. On
\orderspatial we write \tauorders and \tauorderu for the seen and unseen
splits. In BT evaluation the same statistic is computed over the manipulation
order encoded by the generated tree, so a single metric traces the pipeline
from free-form ordering to executable plan. The empirical random baselines on
our 253-scene pool confirm the theoretical value to three decimal places
($\tauorder = 0.500\pm0.004$ seen, $0.500\pm0.003$ unseen;
Appendix~\ref{app:baseline}), validating the scene-complexity balance of the
partition.

\textbf{Positional accuracy (PAR, P-PAR).} $\mathrm{PAR}_{\mathrm{mean}}$
(written \parcmd) measures exact positional accuracy;
$\mathrm{PAR}_{\mathrm{perfect}}$ (P-PAR) measures the fraction of scenes with
entirely correct sequences, reported downstream as $n$/82. PAR floors are
split-specific ($0.333$ seen, $0.309$ unseen; Appendix~\ref{app:baseline}).

\textbf{Schema Validity Rate (\svr).} The fraction of generated BTs that pass
both XML parsing and the \texttt{BehaviorTree.CPP}~v4 schema check (correct
root/\texttt{BehaviorTree} structure, known node identifiers, valid attribute
enumerations):
\begin{equation}
  \svr = \frac{\bigl|\{i : \mathrm{valid\_L1}(\mathrm{BT}_i)\}\bigr|}
              {\bigl|\{i : \mathrm{attempted}(\mathrm{BT}_i)\}\bigr|}.
\end{equation}
\svr is a prerequisite gate: a tree failing the schema check is
non-executable, its priority order is undefined, and it contributes
$\tauorder = 0$ to all aggregates.

\textbf{Stack Dependency Handling Rate (\sdhr).} Whether the model inserts the
required accessibility \texttt{Fallback} guard for every scene containing an
\texttt{on\_top\_of} or \texttt{stacked\_with} constraint:
\begin{equation}
\scriptsize
  \sdhr = \frac{\bigl|\{i : \mathrm{has\_stack}(\mathrm{scene}_i)
                  \wedge \mathrm{has\_ObjectAccessible}(\mathrm{BT}_i)\}\bigr|}
         {\bigl|\{i : \mathrm{has\_stack}(\mathrm{scene}_i)\}\bigr|}.
\end{equation}
\texttt{has\_ObjectAccessible} is necessary but not sufficient: it confirms
that a stacking guard was inserted, not that it targeted the correct blocked
object. Both \svr and \sdhr target $1.0$ on the gold dataset; deviations
indicate structural failures that precede any assessment of priority-ordering
correctness.

\textbf{A deliberate asymmetry in strictness.} In \orderspatial, model outputs
containing a priority tuple anywhere in free-form text are re-parsed by a
GPT-4o-mini extraction call, so that all 253 scenes contribute to \tauorder
regardless of surface formatting variation; the objective there is to test
compositional generalisation and physics internalisation, not the ability to
produce structured outputs. In BT generation no analogous recovery is possible,
because an ill-formed tree is physically non-executable; the schema boundary is
therefore hard. These different objectives justify the different strictness of
evaluation across the two stages.

\section{Knowledge Ingestion Variants}
\label{sec:ki_variants}

Seven \klora variants are evaluated across three SLM families
(Table~\ref{tab:ki_desc}). All use LoRA~\citep{hu2022lora} on all linear layers
($r\!=\!64$, $\alpha\!=\!128$, dropout $0$, lr $=5\!\times\!10^{-5}$, 10
epochs, effective batch size 4) and are merged onto the instruct model via task
arithmetic~\citep{ilharco2023editing} ($\alpha_1\!=\!\alpha_2\!=\!0.5$).
Full algorithmic specifications appear in Appendix~\ref{app:algorithms}.
\emph{Combined} denotes the task-arithmetic merge of the independently trained
SSR-CLM and RAFT adapters; in the downstream evaluation we additionally report
the reverse merge ordering, writing 12-Combined for the ordering used in the
benchmark evaluation and 21-Combined for its reverse, giving eight variants in
total at the skill-adaptation stage.

\begin{table}[t]
\centering
\caption{\klora knowledge-ingestion variants.}
\label{tab:ki_desc}
\footnotesize
\setlength{\tabcolsep}{3pt}
\renewcommand{\arraystretch}{1.02}
\begin{tabular}{@{}p{0.16\columnwidth}p{0.78\columnwidth}@{}}
\toprule
\textbf{Variant} & \textbf{Description} \\
\midrule
Normal   & Packed causal LM over all $\corpus$ sections; all-token CLM loss. \\
Overfit  & Normal extended to near-zero training loss ($\mathcal{L}\!\approx\!0$). \\
\dkl\textsuperscript{*}     
         & Base$\,{\leftarrow}\,$instruct embedding replacement before
           CPT~\citep{bhushan2026dkl}. \\
RAFT     & Retrieval-augmented fine-tuning with hard-negative
           distractors~\citep{zhang2024raft}. \\
SSR-CLM  & Packed CLM on non-spatial entries; loss-masked SSR (Structured Scene
           Reasoning) on spatial entries (loss on assistant tokens only,
           preventing format overfitting). \\
Unified  & SSR-CLM $+$ RAFT $+$ DPO pairs (chosen\,=\,oracle-faithful,
           rejected\,=\,parametric-bias) in one training script. \\
Combined & Task-arithmetic merge of two independently trained adapters
           (SSR-CLM \& RAFT):
           $\theta_{KI} = \theta_I + 0.5\,\tau_{\mathrm{SSR}}
           + 0.5\,\tau_{\mathrm{RAFT}}$, where $\theta_I$ is the off-the-shelf
           instruct SLM. \\
\bottomrule
\end{tabular}

\vspace{2pt}
\parbox{0.94\columnwidth}{\footnotesize
\textsuperscript{*}\textbf{Decoupled Knowledge Learning (DKL).}
}
\end{table}

\section{Benchmark Experiments}
\label{sec:experiments}

\subsection{Models and evaluation protocol}

We evaluate Mistral-7B-Instruct-v0.3~\citep{jiang2023mistral7b} (M-7B),
LLaMA-3.1-8B-Instruct~\citep{grattafiori2024llama} (L-8B), and
Qwen3-4B-Instruct-2507~\citep{qwen2023} (Q-4B), a hybrid reasoning
architecture with thinking mode enabled by default.\footnote{Thinking mode in
Qwen3 reflects realistic deployment for hybrid-reasoning SLMs; see
Appendix~\ref{app:qwen_thinking} for the evaluation-fairness discussion.}
GPT-4.1 serves as the frontier reference in both closed-book and retrieval
settings. Full PPL profiles and per-section MCQ breakdowns for all \klora
variants are in Appendices~\ref{app:ppl} and~\ref{app:mcq_detailed}.

\subsection{Closed-book baselines and retrieval validation}
\label{sec:zeroshot}

\begin{table}[t]
  \centering
  \caption{Closed-book zero-shot performance (PPL on a held-out split of
    $\corpus$). \textbf{Sp.MCQ}$\,{\gg}\,$$S_{\mathrm{agg}}$ for all SLMs
    reflects common-sense ordering priors, not domain knowledge; the gap closes
    only after CPT (Appendix~\ref{app:mcq_detailed}).}
  \label{tab:baseline_closed}
  \setlength{\tabcolsep}{6pt}
  \renewcommand{\arraystretch}{1.05}
  \footnotesize
  \begin{tabular}{lccc}
    \toprule
    \textbf{Model} & \textbf{PPL}$\downarrow$ & $S_{\mathrm{agg}}$
      & \textbf{Sp.MCQ} \\
    \midrule
    GPT-4.1 & NA    & 70.4 & 70.8 \\
    M-7B    & 13.72 & 48.9 & 65.3 \\
    L-8B    & 21.79 & 34.7 & 44.4 \\
    Q-4B    & 24.22 & 57.0 & 69.4 \\
    \bottomrule
  \end{tabular}
\end{table}

\begin{table}[t]
  \centering
  \caption{\orderbench MCQ with retrieval (global hybrid BM25\,$+$\,dense
    pipeline, Appendix~\ref{app:rag_mcq}). Non-spatial categories saturate
    ($\geq\!90\%$), validating corpus coherence. Spatial remains harder for
    smaller models.}
  \label{tab:baseline_rag}
  \setlength{\tabcolsep}{5pt}
  \renewcommand{\arraystretch}{1.05}
  \footnotesize
  \begin{tabular}{lcccc}
    \toprule
    \textbf{Category} & GPT-4.1 & M-7B & L-8B & Q-4B \\
    \midrule
    Axioms       & 100.0 & 96.67 & 98.33 & 98.33 \\
    Intrinsic    & 100.0 & 100.0 & 100.0 & 100.0 \\
    Pairwise     & 100.0 & 98.67 & 96.67 & 98.67 \\
    Higher-Order & 100.0 & 100.0 & 97.01 & 100.0 \\
    Counterfact. &  98.2 & 93.69 & 90.09 &  96.4 \\
    QnA          & 100.0 & 100.0 & 100.0 & 100.0 \\
    \textbf{Spatial}
                 & \textbf{86.11} & \textbf{68.06}
                 & \textbf{51.39} & \textbf{72.22} \\
    \midrule
    $S_{\mathrm{agg}}$ & 97.78 & 93.66 & 89.58 & 94.99 \\
    \bottomrule
  \end{tabular}
\end{table}

Table~\ref{tab:baseline_closed} shows closed-book baselines. LLaMA-3.1-8B
achieves $S_{\mathrm{agg}} = 34.7\%$, barely above the 25\% random baseline,
confirming that \orderworld's fictional physics is genuinely absent from its
pre-training. Spatial MCQ consistently outperforms factual recall categories at
zero-shot for all SLMs (e.g.\ Mistral: $65.3\%$ spatial vs.\ $25.0\%$ Axioms),
because spatial ordering questions resemble common-sense tasks in pre-training
data.

Table~\ref{tab:baseline_rag} reports performance with retrieval. GPT-4.1
reaches $S_{\mathrm{agg}} = 97.78\%$, establishing that \orderbench is
essentially fully answerable given relevant context and confirming corpus
coherence. All non-spatial categories saturate at $\geq\!90\%$ across models.
The spatial MCQ category remains harder under retrieval (GPT-4.1: $86.11\%$;
SLMs: $51$--$72\%$), reflecting that compositional spatial questions require
integrating multiple retrieved rules simultaneously, a task more demanding for
smaller models under identical retrieval. This gap across models under
identical retrieval conditions indicates the bottleneck is reasoning capacity,
not retrieval quality (pipeline details in Appendix~\ref{app:rag_mcq}).

\subsection{The PPL--spatial dissociation}
\label{sec:ppl_diss}

SSR-CLM achieves the lowest PPL across all families (Mistral: 1.52, LLaMA:
2.52, Qwen: 4.62) yet produces the weakest spatial MCQ among non-collapsed
Mistral variants ($43.1\%$), establishing that perplexity is not a reliable
proxy for physics internalisation. Mistral-Unified collapses catastrophically
($S_{\mathrm{agg}} = 1.8\%$) while retaining PPL of 2.15, with no warning from
training curves, underscoring that post-merge evaluation on \orderbench is a
necessary validation step (full profiles in Appendix~\ref{app:ppl}).

\subsection{\klora MCQ summary}

Best MCQ aggregate performances after CPT (SSR-CLM: L-8B $78.6\%$, Q-4B
$87.5\%$; \dkl: Q-4B $86.3\%$) exceed the zero-shot GPT-4.1 baseline
($70.4\%$), demonstrating that targeted CPT on a structured synthetic corpus
can equip SLMs with domain knowledge-extraction capacity approaching that of a
frontier model. Full per-category \klora MCQ results appear in
Appendix~\ref{app:mcq_detailed}.

\subsection{\orderspatial results}
\label{sec:orderspatial_results}

Table~\ref{tab:spatial_all} reports \orderspatial for every variant and family,
and Table~\ref{tab:spatial_improvement} summarises the improvement of the best
variant per family.

\begin{table*}[t]
  \centering
  \caption{\orderspatial: \tauorder and \parcmd for all \klora variants across
    the three SLM families, on the 70 seen and 183 unseen scenes. Best per
    family in \textbf{bold}. The random baseline is $\tauorder = 0.500$ on both
    splits (Appendix~\ref{app:baseline}); \parcmd floors are $0.333$ seen and
    $0.309$ unseen. Off-the-shelf Qwen3 already exceeds GPT-4.1; Mistral-RAFT
    and LLaMA-Unified show inverted seen/unseen gaps.}
  \label{tab:spatial_all}
  \setlength{\tabcolsep}{4.5pt}
  \renewcommand{\arraystretch}{1.08}
  \footnotesize
  \begin{tabular}{l cccc cccc cccc}
    \toprule
    & \multicolumn{4}{c}{\textbf{Mistral-7B (M-7B)}}
    & \multicolumn{4}{c}{\textbf{LLaMA-3.1-8B (L-8B)}}
    & \multicolumn{4}{c}{\textbf{Qwen3-4B (Q-4B)}} \\
    \cmidrule(lr){2-5}\cmidrule(lr){6-9}\cmidrule(lr){10-13}
    & \multicolumn{2}{c}{Seen (70)} & \multicolumn{2}{c}{Unseen (183)}
    & \multicolumn{2}{c}{Seen (70)} & \multicolumn{2}{c}{Unseen (183)}
    & \multicolumn{2}{c}{Seen (70)} & \multicolumn{2}{c}{Unseen (183)} \\
    \cmidrule(lr){2-3}\cmidrule(lr){4-5}\cmidrule(lr){6-7}\cmidrule(lr){8-9}
    \cmidrule(lr){10-11}\cmidrule(lr){12-13}
    \textbf{Variant}
      & \tauorder & \parcmd & \tauorder & \parcmd
      & \tauorder & \parcmd & \tauorder & \parcmd
      & \tauorder & \parcmd & \tauorder & \parcmd \\
    \midrule
    GPT-4.1 (reference)
      & \multicolumn{12}{c}{\emph{family-independent:}\quad
        $\tauorders = 0.441$, $\parcmd^{s} = 0.301$; \quad
        $\tauorderu = 0.415$, $\parcmd^{u} = 0.248$} \\
    \midrule
    Instruct
      & 0.528 & 0.368 & 0.509 & 0.347
      & 0.513 & 0.366 & 0.554 & 0.345
      & 0.582 & 0.394 & 0.597 & 0.412 \\
    \midrule
    Normal
      & 0.49 & 0.38 & 0.55 & 0.36
      & 0.51 & 0.36 & 0.54 & 0.35
      & 0.64 & 0.44 & 0.65 & 0.44 \\
    Overfit
      & 0.43 & 0.36 & 0.55 & 0.35
      & 0.54 & 0.37 & 0.54 & 0.35
      & 0.63 & 0.45 & 0.64 & 0.45 \\
    \dkl
      & 0.67 & 0.50 & 0.69 & 0.49
      & 0.56 & 0.41 & 0.58 & 0.39
      & 0.74 & 0.57 & 0.72 & 0.54 \\
    RAFT
      & \textbf{0.71} & \textbf{0.55} & \textbf{0.73} & \textbf{0.50}
      & 0.56 & 0.42 & 0.60 & 0.36
      & 0.66 & 0.47 & 0.68 & 0.48 \\
    SSR-CLM
      & 0.59 & 0.46 & 0.51 & 0.34
      & 0.76 & \textbf{0.66} & 0.74 & 0.59
      & 0.76 & 0.61 & 0.70 & 0.49 \\
    Unified
      & 0.49 & 0.34 & 0.46 & 0.32
      & \textbf{0.77} & 0.65 & \textbf{0.81} & \textbf{0.71}
      & 0.63 & 0.46 & 0.63 & 0.43 \\
    Combined
      & 0.51 & 0.34 & 0.52 & 0.34
      & 0.76 & 0.60 & 0.70 & 0.48
      & \textbf{0.79} & \textbf{0.69} & \textbf{0.74} & \textbf{0.55} \\
    \bottomrule
  \end{tabular}
\end{table*}

\begin{table}[t]
  \centering
  \caption{Absolute \tauorder improvement to the best \klora variant per
    family. Both seen and unseen improve substantially, providing evidence
    against memorisation ($\HA$).}
  \label{tab:spatial_improvement}
  \setlength{\tabcolsep}{3.5pt}
  \renewcommand{\arraystretch}{1.1}
  \footnotesize
  \begin{tabular}{l l cc cc cc}
    \toprule
    \textbf{Fam.} & \textbf{KI}
      & \multicolumn{2}{c}{\textbf{Instruct}}
      & \multicolumn{2}{c}{\textbf{Post-KI}}
      & \multicolumn{2}{c}{\textbf{Gain}} \\
    \cmidrule(lr){3-4}\cmidrule(lr){5-6}\cmidrule(lr){7-8}
    & & \tauorders & \tauorderu & \tauorders & \tauorderu
      & $\Delta^s$ & $\Delta^u$ \\
    \midrule
    M-7B & RAFT     & 0.528 & 0.509 & 0.71 & 0.73 & $+$0.18 & $+$0.21 \\
    L-8B & Unified  & 0.513 & 0.554 & 0.77 & 0.81 & $+$0.26 & $+$0.26 \\
    Q-4B & Combined & 0.582 & 0.597 & 0.79 & 0.74 & $+$0.21 & $+$0.14 \\
    \bottomrule
  \end{tabular}
\end{table}

\paragraph{Finding 1: pre-training priors conflict with \orderworld physics}
GPT-4.1 without domain adaptation scores $\tauorder = 0.441$ (seen) and $0.415$
(unseen) on \orderspatial; both below the $0.500$ random baseline. The
sub-random unseen result is statistically significant ($z = -2.30$,
$p = 0.011$; Appendix~\ref{app:complexity}), providing strong evidence for
contamination-free evaluation: the frontier model's pre-training priors
actively and systematically conflict with the fictitious physics.

\paragraph{Finding 2: CPT improves \emph{both} seen and unseen \tauorder}
Under pure memorisation ($\HA$), CPT would improve seen performance but leave
unseen stagnant. The observed pattern is categorically different:
Mistral-RAFT achieves $\Delta\tauorder^s = +0.18$, $\Delta\tauorder^u = +0.21$
(unseen improves \emph{more} than seen); LLaMA-Unified achieves
$\Delta\tauorder^s = +0.26$, $\Delta\tauorder^u = +0.26$ (near-equal,
symmetric internalisation); Qwen-Combined achieves $\Delta\tauorder^s = +0.21$,
$\Delta\tauorder^u = +0.14$ (both improve substantially from an already-high
baseline).

\paragraph{Finding 3: post-KI seen/unseen gaps are small, and two of three
best variants show an inverted gap}
Mistral-RAFT: $0.71 - 0.73 = -0.02$ (inverted). LLaMA-Unified:
$0.77 - 0.81 = -0.04$ (inverted). Qwen-Combined: $0.79 - 0.74 = +0.05$.
LLaMA-Unified's equal absolute improvements ($+0.26$ on both splits) provide
the most robust evidence for $\HB$.

\paragraph{Qwen deployment recommendation}
Although Combined achieves the strongest \orderspatial scores for Qwen,
Appendix~\ref{app:general_benchmarks} reveals that two sequential task-arithmetic
merges produce the largest general-capability degradation across all four
general benchmarks ($-10.2$, $-3.1$, $-4.9$, $-11.8$ points on MMLU-Pro, GSM8K,
BBH, IFEval). \dkl-Qwen avoids this: highest MCQ aggregate ($86.3\%$), strong
downstream BT performance ($\tauorder\!=\!0.813$, Section~\ref{sec:slora}), and
substantially better retention.

\section{From Knowledge to Action: System Architecture}
\label{sec:arch}

The remainder of the paper carries the validated models from benchmark scores
to executable plans on a manipulator. The system decouples perception,
cognition, and execution across three layers. This decoupling is deliberate: it
isolates visual understanding from symbolic planning, allows independent
evaluation of each layer, and makes domain knowledge swappable without
re-training the perception or execution modules.



\subsection{Scene perception via visual grammar}
\label{sec:perception}

A lightweight off-the-shelf VLM, SigLIP2~\citep{tschannen2025siglip},
serves as the scene analyser. Rather than fine-tuning perception for a
specific domain, the visual grammar $\mathcal{G}$ of
Eq.~\ref{eq:grammar} provides a fixed set of 25 primitives that are
directly grounded in the scene. The VLM maps each image to a structured
JSON representation of objects, their attributes, spatial/physical
constraints, and inter-object relations. Domain knowledge---including
interaction physics, safety orderings, and hazard hierarchies---is encoded
entirely in terms of these primitives through \orderworld, so the
cognitive module never requires raw image access.

This separation makes perception domain-agnostic and allows the benchmark
to evaluate knowledge and reasoning independently of perceptual
limitations, a bottleneck noted in ELLMER and BTGenBot
\citep{mon2025embodied,polimi_bt_thesis}. The VLM prompt is provided in Appendix~\ref{app:vlm-prompt}.

\subsection{Cognitive layer: two-stage adaptation}
\label{sec:cognitive}

In Stage~1 (\klora knowledge ingestion, Section~\ref{sec:ki_variants}), a LoRA
adapter trained on \orderworld via continual pre-training on the base model is
merged onto the instruct model via task arithmetic~\citep{ilharco2023editing}:
\begin{equation}
  \theta_{\text{KI}} \;=\; \theta_{\text{inst}} + \lambda\,\tau_{\klora},
  \quad \lambda = 0.5 .
  \label{eq:ki_merge}
\end{equation}
This yields a \emph{knowledge-ingested} (KI) model with strong \orderbench and
\orderspatial performance without degrading general instruction-following
(Appendix~\ref{app:general_benchmarks}). In Stage~2 (\slora skill fine-tuning,
Section~\ref{sec:slora}), a second LoRA adapter is trained for the skill of BT
generation and merged onto the KI model to produce the final
\emph{Domain-BT-LM}:
\begin{equation}
  \theta_{\text{Domain-BT-LM}}
  \;=\; \theta_{\text{inst}}
  + \underbrace{\Delta\theta_{\klora}}_{\text{knowledge}}
  + \underbrace{\Delta\theta_{\slora}}_{\text{skill}}.
  \label{eq:domainbt}
\end{equation}
The two stages, knowledge and skill, are fully
disentangled~\citep{liu2024fictitious}: they are trained on separate objectives
and composed via task arithmetic. KI models have passed through Stage~1 only;
Domain-BT-LMs have passed through both.

\subsection{Execution layer: Behavior Tree executor}
\label{sec:executor}

Generated BTs are serialised as XML conforming to the
\texttt{BehaviorTree.CPP}~(v4) schema. The executor parses the tree, exposes
action nodes as ROS service clients, and propagates ticks at a configurable
frequency. Action nodes interface directly with manipulator motion planners
(MoveIt) and a GraspNet-based grasp planner. Each leaf action returns
\texttt{Success}, \texttt{Failure}, or \texttt{Running}; control-flow nodes
(\texttt{Sequence}, \texttt{Fallback}, \texttt{Parallel}) aggregate these
statuses upward. The executor is deployed on the simulated \texttt{iiwa7} arm
in Gazebo (Section~\ref{sec:hitl}) and is hardware-ready for the Doosan A0509S.

\section{Knowledge--Skill Separation and Skill Fine-Tuning}
\label{sec:slora}

\subsection{KI-models without fine-tuning: motivating \slora}
\label{sec:motivating}

A natural starting hypothesis is that a model with strong \orderspatial
performance might zero-shot generate executable BTs encoding those orderings.
We test this directly before any \slora training, evaluating all KI variants on
82 novel test scenes under the same XML prompt used for \slora evaluation.

Table~\ref{tab:zeroshot} reports results. Across Mistral-7B and LLaMA-3.1-8B,
\svr is essentially zero for nearly all KI-adapted variants. The LLaMA instruct
baseline itself achieves only 12.2\% \svr, underscoring that XML schema
compliance is a distinct learned skill absent from standard instruction tuning.
One partial exception is LLaMA-Normal ($\svr = 70.7\%$, $\tauorder = 0.524$):
ordinary packed-CLM pre-training modestly preserves instruction-following
format, but without structured reasoning objectives it does not translate into
compositionally correct BTs. Qwen3-4B is the notable exception: the vanilla
instruct achieves $\svr = 97.6\%$ and $\tauorder = 0.651$, confirming that
Qwen's built-in chain-of-thought activates structural reasoning, alongside its
strong instruction-following capability (the highest off-the-shelf IFEval
performance among the three SLMs here). However, even Qwen's zero-shot
\tauorder is far below the \slora-trained Domain-BT-LMs of
Section~\ref{sec:slora_results}, and the KI variants degrade gracefully but
meaningfully - \dkl drops to $\svr = 61.0\%$ and $\tauorder = 0.012$,
suggesting that the embedding-replacement procedure that was so effective for
MCQ recall actively disrupts Qwen's zero-shot XML generation mode.

\begin{table}[t]
  \centering
  \caption{\textbf{Zero-shot BT generation of KI-models} (82 novel samples, no
    fine-tuning). Near-zero \svr for Mistral and LLaMA variants confirms that
    domain knowledge and BT-generation skill are separable capabilities.
    GPT-4.1 vanilla is included as a frontier reference.}
  \label{tab:zeroshot}
  \setlength{\tabcolsep}{5pt}
  \renewcommand{\arraystretch}{1.05}
  \footnotesize
  \begin{tabular}{llccc}
    \toprule
    \textbf{Family} & \textbf{Variant} & \svr & \tauorder & \sdhr \\
    \midrule
    \multirow{4}{*}{L-8B}
      & Instruct       & 12.2 & 0.057 & 53.4 \\
      & Normal         & 70.7 & 0.524 & 72.4 \\
      & RAFT           & 31.7 & 0.216 & 87.9 \\
      & Unified        &  4.9 & 0.037 & 53.4 \\
    \midrule
    \multirow{4}{*}{M-7B}
      & Instruct       &  2.4 & 0.019 & 25.9 \\
      & Normal         &  4.9 & 0.035 & 44.8 \\
      & RAFT           &  0.0 & 0.000 &  0.0 \\
      & Unified        &  1.2 & 0.012 & 19.0 \\
    \midrule
    \multirow{3}{*}{Q-4B}
      & Instruct       & 97.6 & 0.651 & 100.0 \\
      & Normal         & 86.6 & 0.480 & 62.1  \\
      & \dkl           & 61.0 & 0.012 & 15.5  \\
    \midrule
    \multicolumn{2}{l}{GPT-4.1 (Vanilla)} & 96.3 & 0.480 & 100.0 \\
    \bottomrule
  \end{tabular}
\end{table}

\textbf{Why knowledge-ingested models cannot generate BTs.}
The failure pattern is consistent and informative. Models output free-form
analysis prose - often the kind of step-by-step interaction reasoning that
the CPT training \emph{specifically trained them to produce} via the SSR-CLM
and Unified objectives. The very objective that made these models good at
spatial reasoning (producing structured scene analyses in the Layer~4 format)
actively competes with the XML output format required for BT generation. This
\emph{knowledge--skill separation} is a concrete manifestation of the abstract
principle that domain knowledge and task-format skill are distinct capabilities
requiring distinct adaptation objectives (Eq.~\ref{eq:domainbt}). Injecting
retrieved ontology chunks on top of zero-shot KI-models without \slora
similarly failed: the retrieved text further encouraged analytical prose,
driving \svr lower rather than higher - a failure mode we return to in
Section~\ref{sec:hijacking}. This finding motivates skill
fine-tuning as a necessary second stage.

\subsection{Training data: NL-query and gold-BT dataset}

The \slora training set is derived from the \order scene pool. The \orderworld
corpus includes 212 training scenes: 70 CPT scenes (Layer~4 of \orderworld) and
an additional 142 scenes generated specifically for \slora training, all using
the same oracle methodology (GPT-4.1 with full ontology context injection)
described in Appendix~\ref{app:generation}. For each scene, a natural-language
manipulation query is generated and paired with a gold
\texttt{BehaviorTree.CPP\,v4} XML BT derived from the gold priority order
$\pi^*$. This yields \textbf{424 (NL-query, gold-BT) training pairs}, held to
strict non-overlap with the 82 novel test samples.

Gold BTs are generated by conditioning GPT-4.1 on the gold priority order
$\pi^*$, the scene description $\{O, C, P, R\}$, a fixed node-library
specification (action nodes: \texttt{Pick}, \texttt{Place},
\texttt{CheckStackSafety}, \texttt{CheckSurface}; control-flow:
\texttt{Sequence}, \texttt{Fallback}, \texttt{ReactiveFallback}), and two-shot
BT examples. Stack-dependency handling is enforced: any \texttt{on\_top\_of}
relation activates a \texttt{CheckStackSafety} subtree, which \sdhr measures.

\subsection{\slora mounting strategies: Instruct and Matched}

After \slora training, it is unclear whether to mount the adapter onto the
unmodified KI model or onto a freshly initialised variant. Motivated by the
disentanglement principle of Eq.~\ref{eq:domainbt}, we evaluate two strategies:

\begin{description}[leftmargin=1em,itemsep=1pt]
  \item[\textbf{Instruct}] \slora is trained using the instruct parent model as
    initialisation (not any KI-variant), then mounted onto each KI-variant via
    task arithmetic. This cleanly separates knowledge and skill: the skill
    adapter has never seen \orderworld domain physics, but the combined model
    expresses both.
  \item[\textbf{Matched}] \slora is trained using the specific KI-variant as
    initialisation, then mounted back onto that same model. This can capture
    cross-objective synergies but risks entangling knowledge and skill signals,
    producing destructive interference in the learning subspaces.
\end{description}

\subsection{Results on 82 novel samples}
\label{sec:slora_results}

Tables~\ref{tab:slora_instruct} and~\ref{tab:slora_matched} report \slora
results for all three model families. \tauorder is the primary metric; \parcmd
and P-PAR are complementary diagnostics. Additional patterns from the full
record are deferred to Appendix~\ref{app:slora_full}.

\begin{table}[t]
  \centering
  \caption{\textbf{\slora results - Instruct mounting} (82 novel samples).
    Best per family in \textbf{bold}. P-PAR: $n$/82 exactly correct sequences.}
  \label{tab:slora_instruct}
  \setlength{\tabcolsep}{3.5pt}
  \renewcommand{\arraystretch}{1.05}
  \footnotesize
  \begin{tabular}{llccccc}
    \toprule
    \textbf{Fam.} & \textbf{KI Var.} & \svr & \tauorder & \parcmd & P-PAR & \sdhr \\
    \midrule
    \multirow{8}{*}{L-8B}
      & Inst.$+$\slora & 100.0 & 0.830 & 0.629 & 36 & 100.0 \\
      & Normal         & 100.0 & 0.799 & 0.675 & 43 & 100.0 \\
      & Overfit        & 100.0 & 0.813 & 0.702 & 46 & 100.0 \\
      & \dkl           & 100.0 & 0.815 & 0.718 & 46 & 100.0 \\
      & RAFT           & 100.0 & \textbf{0.833} & \textbf{0.743} & \textbf{50} & 100.0 \\
      & Unified        &  98.8 & \textbf{0.848} & 0.724 & 49 &  98.3 \\
      & SSR-CLM        & \multicolumn{5}{c}{$\svr = 0.0$ (collapsed)} \\
      & 12-,\,21-Comb. & \multicolumn{5}{c}{$\svr = 0.0$ (collapsed)} \\
    \midrule
    \multirow{4}{*}{M-7B}
      & Inst.$+$\slora &  98.8 & 0.762 & 0.560 & 32 & 100.0 \\
      & RAFT           &  98.8 & \textbf{0.826} & \textbf{0.708} & \textbf{38} &  98.3 \\
      & \dkl           &  32.9 & 0.262 & 0.181 &  9 &  32.8 \\
      & Normal,\,Overfit & \multicolumn{5}{c}{$\svr = 0.0$ (collapsed)} \\
    \midrule
    \multirow{5}{*}{Q-4B}
      & Normal         &  98.8 & 0.704 & 0.510 & 29 &  98.3 \\
      & Overfit        & 100.0 & \textbf{0.760} & \textbf{0.480} & \textbf{22} & 100.0 \\
      & \dkl           &  98.8 & 0.717 & 0.399 & 16 &  91.4 \\
      & Unified        &  89.0 & 0.663 & 0.413 & 22 &  50.0 \\
      & 12-,\,21-Comb. & \multicolumn{5}{c}{$\svr = 0.0$ (collapsed)} \\
    \midrule
    \multicolumn{2}{l}{GPT-4.1 (Vanilla)} & 96.3 & 0.480 & 0.350 & 22 & 100.0 \\
    \bottomrule
  \end{tabular}
\end{table}

\begin{table}[t]
  \centering
  \caption{\textbf{\slora results - Matched mounting} (82 novel samples).
    Matching the \slora initialisation to the KI-variant benefits LLaMA
    (Overfit/SSR-CLM) and uniformly stabilises Qwen; most Mistral KI-variants
    collapse.}
  \label{tab:slora_matched}
  \setlength{\tabcolsep}{3.5pt}
  \renewcommand{\arraystretch}{1.05}
  \footnotesize
  \begin{tabular}{llccccc}
    \toprule
    \textbf{Fam.} & \textbf{KI Var.} & \svr & \tauorder & \parcmd & P-PAR & \sdhr \\
    \midrule
    \multirow{8}{*}{L-8B}
      & Normal    & 100.0 & 0.781 & 0.649 & 40 & 100.0 \\
      & Overfit   & 100.0 & \textbf{0.848} & \textbf{0.736} & 46 & 100.0 \\
      & \dkl      & 100.0 & 0.831 & 0.697 & 44 &  96.6 \\
      & RAFT      & 100.0 & 0.767 & 0.574 & 32 &  98.3 \\
      & SSR-CLM   &  97.6 & 0.844 & 0.705 & 45 &  91.4 \\
      & Unified   &  75.6 & 0.601 & 0.497 & 31 &  74.1 \\
      & 12-Comb.  &  95.1 & 0.806 & 0.695 & \textbf{47} &  93.1 \\
      & 21-Comb.  &  93.9 & 0.791 & 0.668 & 42 &  89.7 \\
    \midrule
    \multirow{4}{*}{M-7B}
      & Normal    & 100.0 & \textbf{0.816} & \textbf{0.580} & \textbf{33} &  96.6 \\
      & RAFT      &  98.8 & 0.782 & 0.554 & 32 &  91.4 \\
      & \dkl      &  29.3 & 0.270 & 0.219 & 14 &  25.9 \\
      & Others    & \multicolumn{5}{c}{$\svr = 0.0$ (collapsed)} \\
    \midrule
    \multirow{6}{*}{Q-4B}
      & Normal    & 100.0 & 0.796 & 0.645 & \textbf{41} & 100.0 \\
      & Overfit   & 100.0 & 0.789 & 0.641 & 37 & 100.0 \\
      & \dkl      & 100.0 & \textbf{0.813} & 0.629 & 36 &  96.6 \\
      & RAFT      & 100.0 & 0.803 & \textbf{0.682} & 46 & 100.0 \\
      & Unified   & 100.0 & 0.810 & 0.630 & 37 &  98.3 \\
      & 12-,\,21-Comb. & \multicolumn{5}{c}{$\svr = 0$--$10\%$ (almost collapsed)} \\
    \bottomrule
  \end{tabular}
\end{table}

\paragraph{LLaMA-3.1-8B is the strongest family, and its top performers reveal
two distinct pathways to BT quality}
Under Instruct mounting, LLaMA-Unified and LLaMA-RAFT achieve
$\tauorder = 0.848$ and $0.833$ respectively. Under Matched mounting,
LLaMA-Overfit and SSR-CLM reach $0.848$ and $0.844$. These are the highest
results across all families and configurations, but they arise through two
mechanistically distinct routes.

\emph{Pathway~1 - world-model induction (Unified).}
LLaMA-Unified achieved the highest \orderspatial \tauorderu of $0.81$ with
near-equal seen/unseen gains of $+0.26$ each, the strongest evidence for
abstract-layer physics internalisation over memorisation in
Section~\ref{sec:orderspatial_results}. This compositional generalisation
capacity appears to be exactly what \slora then leverages: a model that can
freely compose the interaction grammar to produce a correct priority ordering
in novel scenes transfers that ability naturally into BT structure, where the
ordering must be expressed as an executable tree. Unified also exhibits the
sharpest Instruct-vs-Matched asymmetry ($\Delta = 0.247$), confirming that
clean disentanglement of knowledge and skill is essential precisely for the
variant whose knowledge representation is most compositionally structured.

\emph{Pathway~2 - deep parametric encoding (Overfit).}
LLaMA-Overfit's \orderspatial unseen \tauorder is only $0.54$, barely above the
random baseline of $0.500$. Overfit does \emph{not} demonstrate world-model
induction in the \orderspatial sense; it extended training to near-zero loss,
which produced a model that closely reproduced seen-scene analyses without
meaningful generalisation. Yet under Matched \slora mounting it ties for the
highest BT \tauorder ($0.848$). Our hypothesis is that this is structural: deep,
near-lossless encoding drives domain physics patterns into the model's
parametric weights at a representation level that, while not compositionally
generalisable at the scene-analysis level, provides a strong prior for the
BT-generation skill to exploit. \slora Matched training from this initialisation
finds a narrow but stable optimum that produces correctly ordered trees - a
different mechanism from Unified's compositional transfer, and one that only
emerges under Matched mounting.

Meanwhile, LLaMA-SSR-CLM, which achieved the highest MCQ aggregate ($78.6\%$)
and strong \orderspatial performance ($0.76$ seen, $0.74$ unseen), collapses
completely under Instruct mounting ($\svr = 0$), a failure mode already flagged
in Section~\ref{sec:ppl_diss} as undetectable from PPL or MCQ scores. Under
Matched mounting it recovers to $\tauorder = 0.844$, confirming the failure is a
format-interference artefact of the specific adapter composition, not a
fundamental limitation of SSR-CLM's knowledge representation.

\paragraph{RAFT transfers well to BT generation despite moderate MCQ scores}
LLaMA-RAFT achieves $\tauorder = 0.833$ under Instruct mounting despite having
only a $47.2\%$ MCQ aggregate (against a $34.7\%$ zero-shot baseline). RAFT's
retrieval-augmented fine-tuning objective, which trains the model to reason
faithfully over retrieved context, appears to transfer directly to the
downstream task of conditioning on structured inputs (scene JSON $+$ priority
list) and generating structured outputs (XML). The same structured
input--output conditioning that RAFT was trained on in Stage~1 is exactly what
\slora requires in Stage~2, making RAFT a particularly compatible
initialisation. For Mistral, RAFT with Instruct mounting likewise performs
best, positively correlating with its \orderspatial performance.

\paragraph{SFT alone is a surprisingly strong baseline}
The instruct model equipped only with the \slora adapter
(Inst.$+$\slora, no KI knowledge) achieves $\tauorder = 0.830$ for LLaMA and
$0.762$ for Mistral. For LLaMA this falls only $0.018$ below the
LLaMA-Unified / Instruct peak ($0.848$). KI knowledge therefore provides a
meaningful but not dramatic advantage on BT format quality as measured by
\tauorder. It is likely that the semantic correctness of the ordering, which
the gold BTs guarantee, is where KI knowledge matters most, and that this
advantage is only partially captured by \tauorder and \parcmd. Determining the
precise conditions under which domain knowledge ingestion provides decisive
gains over pure skill fine-tuning remains an important direction for future
work.

\paragraph{Mistral-7B is brittle to adapter composition, but RAFT is the
exception that closes the knowledge-to-action chain}
Several Mistral KI-variants produce $\svr = 0\%$ even after \slora mounting,
with failures concentrated in variants (Overfit, SSR-CLM, Unified,
12-/21-Combined) that underwent the most aggressive CPT objectives. This
mirrors the Mistral-Unified collapse of Section~\ref{sec:ppl_diss}
($S_\mathrm{agg} = 1.8\%$, undetected by PPL $= 2.15$) and reflects Mistral's
parameter geometry being particularly sensitive to directional interference
from multiple LoRA objectives composed via task arithmetic. Under Matched
mounting, Mistral-Normal achieves $\tauorder = 0.816$, the best Matched variant
for this family. However, the best Mistral result \emph{overall} is
Mistral-RAFT under Instruct mounting ($\tauorder = 0.826$, $\svr = 98.8\%$),
and this is not coincidental: Mistral-RAFT was the only Mistral variant to
exhibit genuine world-model induction evidence in
Section~\ref{sec:orderspatial_results}, an inverted seen/unseen \orderspatial
gap with unseen improvement ($+0.21$) exceeding seen improvement ($+0.18$).
For Mistral, the knowledge-to-action chain connects precisely through RAFT: the
one variant whose benchmark profile reflects abstract internalisation is also
the one that best converts that knowledge into executable plans, and is robust
enough under Instruct mounting to avoid the collapse that afflicts all other
aggressive CPT variants in this family.

\paragraph{Qwen3-4B performs well under Matched mounting but is sensitive to
adapter initialisation}
Under Matched mounting, all five Qwen variants achieve $\svr = 100\%$ with
$\tauorder \in [0.789, 0.813]$, a remarkably uniform and robust band. Under
Instruct mounting, however, the base Instruct $+$ \slora collapses
($\svr = 0\%$) despite Qwen-Instruct having $97.6\%$ \svr in zero-shot
evaluation. This is a sharp architecture-specific sensitivity: the Qwen3 hybrid
reasoning architecture requires that the \slora adapter be trained from a model
already exposed to \orderworld physics. Qwen-\dkl, which had the highest MCQ
aggregate ($86.3\%$), achieves a solid but not dominant BT result
($\tauorder = 0.813$ under Matched), while Qwen-Unified, which had the
strongest Qwen spatial MCQ ($80.6\%$), achieves $0.810$. For Qwen, the ranking
of KI variants in BT generation is not strongly correlated with MCQ rankings,
confirming the MCQ--BT dissociation at the intra-family level as well.

\subsection{What \orderbench predicts, and what it does not}
\label{sec:predictors}

This subsection is the benchmark-facing payoff of the downstream experiments:
it establishes which \order measurement actually anticipates robot plan
quality.

\textbf{MCQ aggregate is an unreliable predictor.}
LLaMA-Unified achieves among the strongest \slora results
($\tauorder = 0.848$ under Instruct) while ranking only moderately on
\orderbench MCQ aggregate ($72.8\%$, third among LLaMA variants, behind SSR-CLM
at $78.6\%$ and 12-Combined at $78.1\%$), but the best on \orderspatial
($\tauorderu = 0.81$). Conversely, LLaMA-SSR-CLM achieves the highest MCQ
aggregate yet collapses under Instruct \slora ($\svr = 0$), and LLaMA-RAFT
achieves only $47.2\%$ MCQ aggregate yet reaches $\tauorder = 0.833$. A
practitioner selecting models by MCQ rank would discard RAFT and retain
SSR-CLM: the opposite of the correct deployment decision.

\textbf{\orderspatial concordance is a much stronger predictor, with
principled exceptions.}
It cleanly identifies the strongest performer (Unified,
$\tauorderu = 0.81 \rightarrow$ BT $0.848$) and anticipates the catastrophic
failure case: even SSR-CLM's strong \orderspatial score of $0.74$ cannot
overcome a format-interference failure mode under Instruct composition, so
\orderspatial correctly flags it as a high-knowledge model but cannot predict
the interaction between CPT objectives and \slora adapter geometry. Where
\orderspatial is less predictive is among intermediate variants: LLaMA-RAFT
($\tauorderu = 0.60$) outperforms LLaMA-\dkl ($0.58$) and LLaMA-Overfit
($0.54$) on \orderspatial only marginally, yet the BT gap between them is not
proportional, and Overfit ties for the top BT result under Matched mounting.
The dissociation is principled: \orderspatial measures compositional
generalisation of the \emph{priority-reasoning procedure} through free-form
text, while BT generation additionally requires structured XML generation skill
and scene-to-tree mapping - contributions that \slora supplies and that
interact differently with each KI variant's weight geometry. The appropriate
interpretation is therefore that \orderspatial concordance \emph{shortlists top
candidates and rules out failures}, while the full BT ranking requires
empirical post-merge \svr validation - precisely the discipline that
Section~\ref{sec:ppl_diss} recommends for PPL-based model selection.

\section{Retrieval for Fine-Tuned SLMs: Context Hijacking and
  Two-Stage Recovery}
\label{sec:rag}

\subsection{Context hijacking}
\label{sec:hijacking}

Given that retrieval nearly saturates \orderbench MCQ
(Section~\ref{sec:zeroshot}), a natural question is whether the same structured
retrieval index can augment fine-tuned SLMs at BT-generation time. The answer,
in its unadapted form, is no, and the reason is architectural: \emph{context
hijacking}.

The structured retrieval index (Appendix~\ref{app:rag_bt}) assembles rich
analytical physics descriptions within a 4{,}000-token budget. When a
fine-tuned SLM receives this context directly in the BT-generation call, it
attends to those descriptions and produces the kind of free-form reasoning text
that the CPT training rewarded, rather than the XML output that \slora trained.
The model's CPT and \slora objectives are in conflict: CPT trained it to
produce structured scene analyses; the injected context re-activates that
objective and suppresses the \slora-trained XML generation mode. This is a
direct instance of the lost-in-the-middle
phenomenon~\citep{liu2024lost}, in which extended context saturates effective
utilisation and degrades instruction compliance; GPT-4.1's much larger context
window makes it resilient to the same load, which is why the failure is
specific to the fine-tuned SLMs. Table~\ref{tab:context_hijack} quantifies the
collapse and its recovery.

\begin{table}[t]
  \centering
  \caption{\textbf{Context-hijacking failure and recovery} (LLaMA-3.1-8B).
    Injecting the full assembled ontology context directly into the
    BT-generation call collapses \svr from 100\% to 0\% in the fine-tuned SLM;
    the two-stage pipeline with a condensed ($\leq$800-token)
    priority-reasoning call restores functional output, though still below the
    closed-book peak.}
  \label{tab:context_hijack}
  \setlength{\tabcolsep}{5pt}
  \renewcommand{\arraystretch}{1.1}
  \footnotesize
  \begin{tabular}{lcc}
    \toprule
    \textbf{Setup} & \svr & \tauorder \\
    \midrule
    Closed-book best (RAFT, Instruct)          & 100.0\% & 0.833 \\
    SLM $+$ full assembled context (unadapted) &   0.0\% & 0.000 \\
    SLM $+$ two-stage-cs (adapted)             & 100.0\% & 0.798 \\
    \bottomrule
  \end{tabular}
\end{table}

\subsection{Structured retrieval for SLMs}
\label{sec:ragv3_desc}

The retrieval framework maintains separate exact-match stores for intrinsic
semantics, pairwise interactions, and counterfactuals, and a FAISS vector index
(all-MiniLM-L6-v2) for higher-order regimes, dominance axioms, and QnA entries.
Retrieved chunks are ranked and assembled within a 4{,}000-token budget using
the knowledge-type weighting of Table~\ref{tab:rag_weights}
(Appendix~\ref{app:rag_bt}). Two training-set examples are retrieved per test
scene by structural similarity (object count, stacking presence,
duplicate-object presence). Spatial scene descriptions are deliberately
excluded from the retrieved examples (\emph{no-spatial} mode), a configuration
that consistently outperformed pipelines including spatial scene text.

\subsection{Two pipeline variants}
\label{sec:four_pipelines}

Context hijacking motivates the two structured pipeline variants of
Figure~\ref{fig:rag_pipelines}, which distribute the priority-reasoning burden
between retrieval and the SLM without ever handing the SLM a full ontology
dump.

\begin{description}[leftmargin=1em,itemsep=2pt]
  \item[\textbf{Two-Stage}] The SLM itself performs Stage~1: it receives a
    condensed ontology context ($\leq$800 tokens, top-5 counterfactual, axiom,
    and inter-object pairwise chunks) and outputs a numbered priority list. A
    second SLM call generates the BT from that list. The two-stage separation
    prevents the full ontology context from saturating the BT-generation call.
  \item[\textbf{Two-Stage-CS}] Combines the two-stage separation with
    \emph{constrained start}: the first
    \texttt{<root BTCPP\_format="4">} tokens are forced into the decoder prefix
    of the Stage~2 call before autoregressive generation begins. This commits
    the model to valid XML from token~1, preventing the most common failure
    mode - the model opening with analysis text rather than an XML tag.
\end{description}

\begin{figure}[t]
\begin{tcolorbox}[
  colback=gray!4, colframe=NavyBlue!60,
  title={\small\textbf{Two-Stage Pipeline}},
  fontupper=\scriptsize, boxrule=0.7pt,
  left=1.5mm, right=1.5mm, top=1.5mm, bottom=1mm]
\begin{algorithmic}[1]
  \State \textbf{Input:} scene $\{O,C,P,R\}$, NL task $q$
  \State Retrieve weighted chunks $\mathcal{H}$; condense to $\mathcal{H}_{800}$
         (top-5 CF / axiom / inter-pairwise)
  \State \textbf{Stage 1:} $\pi \leftarrow
         \textsc{SLM}(\text{sys}_{\pi},\, \mathcal{H}_{800},\, \{O,C,P,R\})$
  \State \textbf{Stage 2:} $\hat{t} \leftarrow
         \textsc{SLM}(\text{sys}_\text{BT},\, q,\, \{O,C,P,R\},\, \pi)$
  \State \textbf{Return:} BT XML $\hat{t}$
\end{algorithmic}
\end{tcolorbox}
\vspace{-4pt}
\begin{tcolorbox}[
  colback=gray!4, colframe=NavyBlue!60,
  title={\small\textbf{Two-Stage-CS Pipeline (constrained start)}},
  fontupper=\scriptsize, boxrule=0.7pt,
  left=1.5mm, right=1.5mm, top=1.5mm, bottom=1mm]
\begin{algorithmic}[1]
  \State \textbf{Input:} scene $\{O,C,P,R\}$, NL task $q$
  \State Retrieve $\mathcal{H}$; condense to $\mathcal{H}_{800}$
  \State \textbf{Stage 1:} $\pi \leftarrow
         \textsc{SLM}(\text{sys}_{\pi},\, \mathcal{H}_{800},\, \{O,C,P,R\})$
  \State Force prefix $p_0 \leftarrow$ \texttt{<root BTCPP\_format="4">}
  \State \textbf{Stage 2:} $\hat{t} \leftarrow p_0 \;\|\;
         \textsc{SLM}(\text{sys}_\text{BT},\, q,\, \{O,C,P,R\},\, \pi,\, p_0)$
  \State \textbf{Return:} BT XML $\hat{t}$
\end{algorithmic}
\end{tcolorbox}
\caption{\textbf{The two structured retrieval pipelines.} Both delegate
  Stage~1 priority reasoning to a condensed SLM call ($\leq$800 tokens) rather
  than injecting the full assembled context into the generation call; the CS
  variant additionally forces an XML prefix to prevent prose-opening failures.
  Both share the same weighted retrieval index and no-spatial few-shot
  conditioning.}
\label{fig:rag_pipelines}
\end{figure}

\subsection{Full grammar-constrained decoding degrades performance}
\label{sec:cfd}

A natural extension of prefix forcing is to enforce the full
\texttt{BehaviorTree.CPP\,v4} XML grammar at every decoding step via
CFG-guided generation~\citep{zhang2023don}. We evaluate this ``hard'' constraint
against the softer constrained start. Full CFG-guided decoding \emph{degrades}
BT generation quality rather than improving it. The failure mode is consistent:
at each generation step the grammar constraint prunes the valid token set to a
small admissible vocabulary; the SLM's probability mass, shaped by \slora
fine-tuning, is forced onto tokens that are grammatically valid but
semantically inconsistent with the intended plan, producing structurally
compilable but semantically incorrect BTs. Constrained-start prefix forcing
provides the effective balance: it eliminates the most common failure mode
(prose opening) while leaving the model's fine-tuned generation distribution
undisturbed at subsequent steps.

\subsection{Structured retrieval results}
\label{sec:ragresults}

Table~\ref{tab:ragv3} reports the two pipelines across the three families, with
the closed-book \tauorder for the same (variant, mounting) pair as reference.

\begin{table}[t]
  \centering
  \caption{\textbf{Structured retrieval results} (82 novel samples).
    CB-\tauorder is the closed-book reference for the same (variant, mounting)
    pair. Retrieval revives variants that collapsed closed-book, but does not
    reach the closed-book peak of $0.848$.}
  \label{tab:ragv3}
  \setlength{\tabcolsep}{3pt}
  \renewcommand{\arraystretch}{1.05}
  \footnotesize
  \begin{tabular}{llcccc}
    \toprule
    \textbf{Variant / Mount} & \textbf{Pipeline} & \svr & \tauorder
      & \parcmd & CB-\tauorder \\
    \midrule
    \multicolumn{6}{l}{\textit{LLaMA-3.1-8B}} \\
    Base-Inst / Inst. & two-stage    & 100\% & 0.819 & 0.686 & 0.830 \\
    Base-Inst / Inst. & two-stage-cs & 100\% & 0.798 & 0.663 & 0.830 \\
    Overfit / Match.  & two-stage    & 100\% & 0.811 & 0.713 & 0.848 \\
    Overfit / Match.  & two-stage-cs &  99\% & 0.801 & 0.714 & 0.848 \\
    12-Comb. / Match. & two-stage    & 100\% & 0.824 & 0.719 & 0.806 \\
    12-Comb. / Match. & two-stage-cs & 100\% & \textbf{0.843} & \textbf{0.723} & 0.806 \\
    \midrule
    \multicolumn{6}{l}{\textit{Mistral-7B}} \\
    Normal / Inst.    & two-stage    & 95.1\% & 0.748 & 0.535 & N/A$^{\dagger}$ \\
    \midrule
    \multicolumn{6}{l}{\textit{Qwen3-4B}} \\
    12-Comb. / Match. & two-stage-cs & 100\% & \textbf{0.796} & \textbf{0.646} & $0$--$10\%$ \svr$^{\ddagger}$ \\
    Base-Inst / Match.& two-stage-cs & 100\% & 0.720 & 0.530 & N/A$^{\dagger}$ \\
    \midrule
    \multicolumn{6}{l}{\textit{GPT-4.1 frontier reference}} \\
    GPT-4.1           & two-stage    & -   & 0.599 & -   & 0.480 \\
    GPT-4.1           & two-stage-cs & -   & 0.606 & -   & 0.480 \\
    \bottomrule
  \end{tabular}
  \vspace{2pt}
  \begin{minipage}{0.98\columnwidth}
  \scriptsize $^{\dagger}$Closed-book collapse ($\svr = 0$) for this
  (variant, mounting) pair. $^{\ddagger}$Qwen 12-/21-Combined were almost
  collapsed closed-book under Matched ($\svr = 0$--$10\%$).
  \end{minipage}
\end{table}

\textbf{Retrieval revives collapsed variants but cannot match closed-book
peaks.} The most striking result is for the Combined variants. Qwen's
12-Combined was almost collapsed in closed-book Matched evaluation
($\svr = 0$--$10\%$), yet under two-stage-cs it reaches $\tauorder = 0.796$
with $\svr = 100\%$; LLaMA's 12-Combined, which collapses entirely under
Instruct mounting, reaches $\tauorder = 0.843$ under two-stage-cs, the
strongest retrieval-augmented result overall. Delegating priority resolution to
a separate, condensed call removes enough of the compositional reasoning demand
from the generation call that variants which cannot sustain it alone still
produce valid and well-ordered trees.

\textbf{The best retrieval result does not exceed the best closed-book.}
LLaMA 12-Combined / two-stage-cs ($0.843$) approaches but does not reach the
closed-book peak of LLaMA-Unified / Instruct ($0.848$). This gap is principled:
parametric knowledge internalised via CPT provides a compositional reasoning
capacity that retrieved context cannot fully replicate, because composing
multiple interaction rules requires the grammar to be available as a parametric
procedure rather than as retrieved text. This is precisely the retrieval
dissociation established on \orderspatial in
Section~\ref{sec:orderspatial_results}, now re-confirmed in the downstream BT
generation task. Finding the optimal retrieval architecture for fine-tuned,
knowledge-ingested SLMs on this task remains an open problem.

\textbf{GPT-4.1 benefits from the condensed pipelines, but far less than
adaptation benefits the SLMs.} GPT-4.1 improves from $\tauorder = 0.480$
(vanilla) to $0.599$ (two-stage) and $0.606$ (two-stage-cs, P-PAR $= 10$/82),
which we take as its strongest configuration on this task and use as the
frontier reference in Section~\ref{sec:vs_gpt}.

\textbf{\svr instability is model- and pipeline-specific.} LLaMA variants
maintain $\svr \approx 100\%$ across both pipelines. Mistral is the least
stable family, with Normal / two-stage at $95.1\%$ the only surviving
configuration, the same tokenisation and interference sensitivity that caused
systematic instability across Mistral's aggressive CPT variants in
Sections~\ref{sec:ppl_diss} and~\ref{sec:slora_results}. Qwen base-instruct
achieves $\svr = 100\%$ under two-stage-cs, confirming the
architecture-specific pipeline sensitivity observed in closed-book Instruct
mounting.

\section{Human-in-the-Loop Execution on \texttt{iiwa7} in Gazebo}
\label{sec:hitl}

\subsection{Simulation environment and visual grammar testbed}

Experiments are conducted in Gazebo (ROS~1 Noetic) using an open-source
\texttt{iiwa7} arm model. The tabletop workspace contains primitively coloured
geometric objects (cubes, cylinders, spheres, cones, discs) in colours and
textures drawn from the five-axis visual grammar $\mathcal{G}$, enabling direct
correspondence between the Gazebo scene and the \orderworld evaluation world.
Object poses are published via \texttt{/scene\_perception} as structured JSON
$\{O, C, P, R\}$. The BT executor is a ROS node consuming XML BTs via an action
server, with MoveIt providing motion planning and collision avoidance.

\subsection{End-to-end pipeline}
\label{sec:e2e}

The end-to-end pipeline begins with the VLM of Section~\ref{sec:perception},
which perceives the tabletop scene and converts it into the structured JSON
representation $\{O, C, P, R\}$. Given this scene description and a
natural-language query (e.g.\ \textit{``Clear the workspace following safety
protocols''}), the Domain-BT-LM generates an executable
\texttt{BehaviorTree.CPP\,v4} XML encoding the intended priority ordering
$\hat{\pi}$. A human operator remains in the loop to verify the generated BT
and can iteratively refine it via re-prompting (Section~\ref{sec:reprompt})
until satisfactory. Once approved, the BT is parsed and executed by the
\texttt{BehaviorTree.CPP} engine. The final tree ensures that all objects are
manipulated and placed into their respective colour-coded bins while adhering
to the safest priority order defined by \orderworld.
Figure~\ref{fig:gazebo} shows a snapshot of the three-layer architecture in
action on Configuration~\#88.

\begin{figure*}[t]
  \centering
  \includegraphics[width=\textwidth]{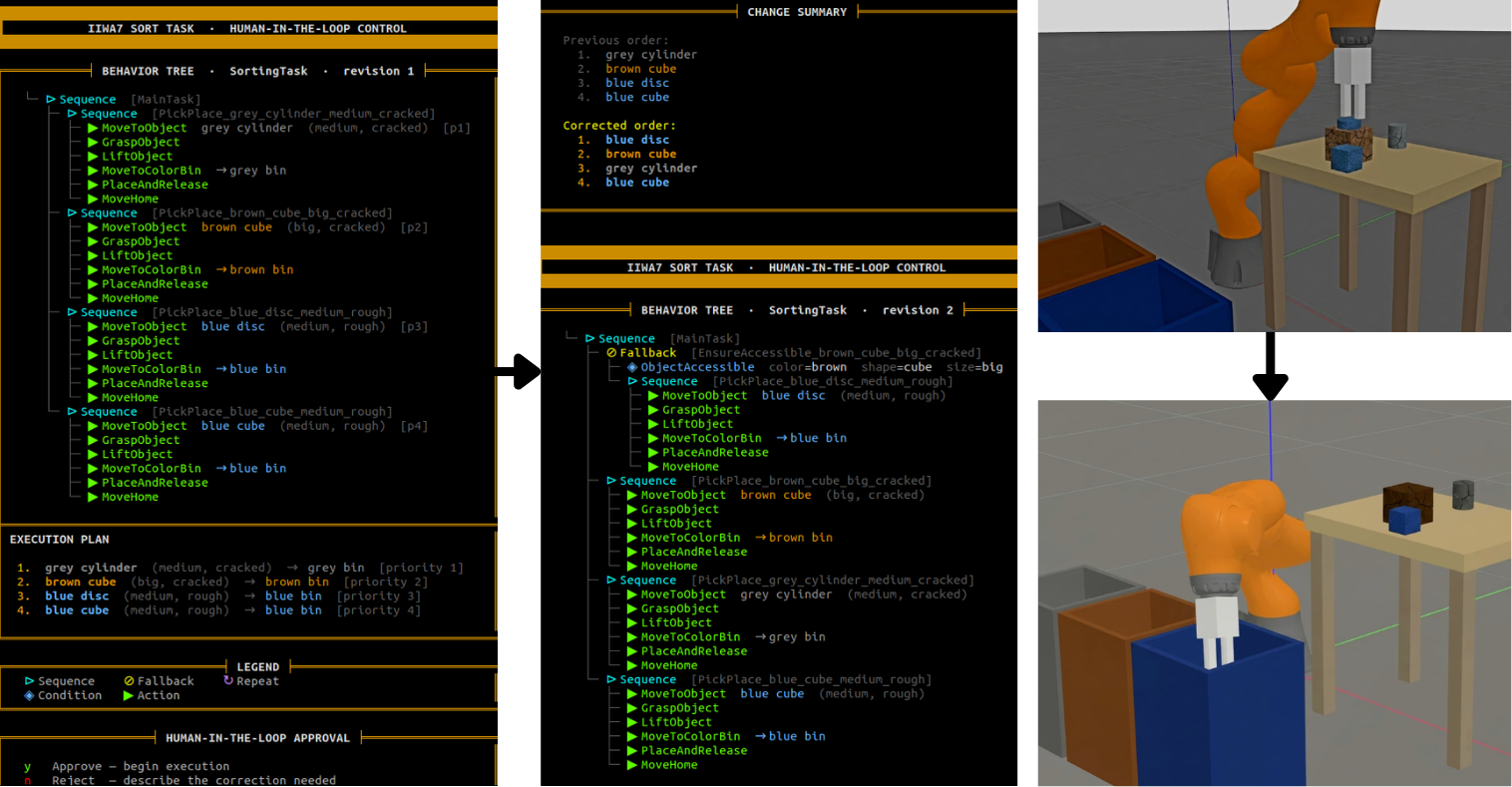}
  \caption{%
    \textbf{End-to-end pipeline snapshots on the \texttt{iiwa7} in Gazebo.}
    \textbf{Left:} the direct LLM-generated BT XML and execution summary for
    Configuration~\#88, given the VLM $\{O, C, P, R\}$ JSON and the NL user
    input; the tree commits two errors, placing the grey cylinder at
    priority~1 (correct: priority~3) and emitting the blue disc as a standalone
    top-level sequence instead of inside the required \texttt{Fallback}
    accessibility guard.
    \textbf{Centre:} the corrected BT XML after a single HITL operator
    re-prompt, in which \texttt{EnsureAccessible\_brown\_cube} becomes the
    first child of \texttt{MainTask} and the blue disc appears only as the
    conditional blocker-removal step inside it.
    \textbf{Top right:} the initial Gazebo scene for Configuration~\#88.
    \textbf{Bottom right:} the \texttt{iiwa7} arm grasping the first-priority
    object (the blue disc, the stacking blocker) and placing it in the blue
    colour bin.}
  \label{fig:gazebo}
\end{figure*}

\subsection{Re-prompting for runtime priority correction}
\label{sec:reprompt}

A key design principle of the HITL framework is that operator corrections
should require \emph{minimal effort} and produce \emph{fully executable}
corrected trees without re-running the full pipeline. When the Domain-BT-LM
generates a syntactically valid BT with an incorrect priority order (e.g.\
objects A, B, C ordered A$\to$C$\to$B when the correct order is A$\to$B$\to$C),
the operator issues a single correction. It was qualitatively observed that
even minor syntactic errors are corrected in the same pass, as in the
Configuration~\#88 example.

The Domain-BT-LM receives the original scene description, the original BT as
context, and the correction prompt. It regenerates only the ordering structure
while preserving action node semantics and stack-safety subtrees. In our
evaluations, all single-correction HITL cases produced fully compilable,
correctly ordered trees on the first re-prompt, demonstrating that
\slora-trained models have robust format adherence under correction
instructions. Figure~\ref{fig:hitl_example} traces both errors in the
LLM-generated tree for Configuration~\#88 and their resolution via a single
operator re-prompt. The re-prompt template is provided in Appendix~\ref{app:hitl-prompt}.

\begin{figure*}[t]
\centering
\begin{tcolorbox}[
  colback=NavyBlue!3, colframe=NavyBlue!50,
  title={\small\textbf{%
    Scene - Configuration~\#88 \quad
    (\#1~grey cylinder,\;\#2~blue disc,\;\#3~brown cube,\;\#4~blue cube)}},
  fontupper=\normalfont\scriptsize,
  boxrule=0.65pt, left=3mm, right=3mm, top=1.5mm, bottom=1.5mm]
\textbf{Objects:}\enskip
(1)~grey cylinder (medium, cracked);\quad
(2)~blue disc (medium, rough);\quad
(3)~brown cube (big, cracked);\quad
(4)~blue cube (medium, rough).\\[2pt]
\textbf{Spatial:}\enskip
\texttt{disc\;stacked\_with\;brown\_cube}~$\bullet$~%
\texttt{brown\_cube\;clustered\_with\;blue\_cube}~$\bullet$~%
\texttt{grey\_cylinder\;isolated\_from\;all}.\\[2pt]
\textbf{Key hazard:} the blue disc (\#2) sits \emph{on top of} the brown cube
(\#3), blocking access; the disc must be cleared before the brown cube can be
grasped.\\[2pt]
\textbf{\orderworld priority:}\enskip
\textit{blue disc}~$\to$~\textit{brown cube}~$\to$~\textit{grey
cylinder}~$\to$~\textit{blue cube.}
\end{tcolorbox}


\begin{minipage}[t]{0.483\textwidth}
\begin{tcblisting}{%
  height=7.35cm,
  valign=top,
  colback=red!3!white, colframe=red!40!gray,
  title={\small\textbf{LLM-Generated BT \quad(Incorrect)}},
  boxrule=0.65pt, left=1.5mm, right=1.5mm, top=1.5mm, bottom=1.5mm,
  listing only,
  listing options={
    language=XML,
    basicstyle=\ttfamily\tiny,
    columns=fullflexible,
    keepspaces=true,
    showstringspaces=false,
    commentstyle=\color{gray!65},
    keywordstyle=\color{NavyBlue!75},
    stringstyle=\color{purple!55!black},
    escapechar=|,
    morekeywords={root,BehaviorTree,Sequence,Action,Fallback,Condition}
  }
}
<root BTCPP_format="4">
 <BehaviorTree ID="SortingTask">
  <Sequence name="MainTask"> |{\normalfont\tiny\textcolor{red!75!black}{\bfseries$\leftarrow$ 4 flat Seqs; no Fallback guard}}|
   <Sequence name="PP_grey_cylinder_med_cracked"> |{\normalfont\tiny\textcolor{red!75!black}{\bfseries$\leftarrow$ wrong P1}}|
    <Action ID="MoveToObject" color="grey"
            shape="cylinder" priority="1"/>
    <Action ID="GraspObject"/>
    <Action ID="LiftObject"/>
    <Action ID="MoveToColorBin" color="grey"/>
    <Action ID="PlaceAndRelease"/>
    <Action ID="MoveHome"/>
   </Sequence>
   <Sequence name="PP_brown_cube_big_cracked">
    <!-- ... same 6 pick-place actions ... -->
   </Sequence>
   <Sequence name="PP_blue_disc_med_rough"> |{\normalfont\tiny\textcolor{red!75!black}{\bfseries$\leftarrow$ standalone task (error!)}}|
    <!-- ... same 6 pick-place actions ... -->
   </Sequence>
   <Sequence name="PP_blue_cube_med_rough">
    <!-- ... same 6 pick-place actions ... -->
   </Sequence>
  </Sequence>
 </BehaviorTree>
</root>

\end{tcblisting}
\end{minipage}%
\hfill%
\begin{minipage}[t]{0.483\textwidth}
\begin{tcblisting}{%
  colback=green!2!white, colframe=OliveGreen!55!gray,
  title={\small\textbf{Gold BT \quad(After HITL Correction)}},
  boxrule=0.65pt, left=1.5mm, right=1.5mm, top=1.5mm, bottom=1.5mm,
  listing only,
  listing options={
    language=XML,
    basicstyle=\ttfamily\tiny,
    columns=fullflexible,
    keepspaces=true,
    showstringspaces=false,
    commentstyle=\color{gray!65},
    keywordstyle=\color{OliveGreen!70!black},
    stringstyle=\color{OliveGreen!55!black},
    escapechar=|,
    morekeywords={root,BehaviorTree,Sequence,Action,Fallback,Condition}
  }
}
<root BTCPP_format="4">
 <BehaviorTree ID="SortingTask">
  <Sequence name="MainTask"> |{\normalfont\tiny\textcolor{OliveGreen!75!black}{\bfseries$\leftarrow$ 4 children: Fallback~$+$~3~Seqs}}|
   <Fallback name="EnsureAccessible_brown_cube_big_cracked"> |{\normalfont\tiny\textcolor{OliveGreen!75!black}{\bfseries$\leftarrow$ stack guard}}|
    <Condition ID="ObjectAccessible" color="brown"
               shape="cube" size="big"/>
    <Sequence name="PP_blue_disc_med_rough"> |{\normalfont\tiny\textcolor{OliveGreen!75!black}{\bfseries$\leftarrow$ blocker removal only}}|
     <Action ID="MoveToObject" color="blue"
             shape="disc" size="medium"/>
     <Action ID="GraspObject"/>
     <Action ID="LiftObject"/>
     <Action ID="MoveToColorBin" color="blue"/>
     <Action ID="PlaceAndRelease"/>
     <Action ID="MoveHome"/>
    </Sequence>
   </Fallback>
   <Sequence name="PP_brown_cube_big_cracked">
    <!-- ... same 6 pick-place actions ... -->
   </Sequence>
   <Sequence name="PP_grey_cylinder_med_cracked">
    <!-- ... same 6 pick-place actions ... -->
   </Sequence>
   <Sequence name="PP_blue_cube_med_rough">
    <!-- ... same 6 pick-place actions ... -->
   </Sequence>
  </Sequence>
 </BehaviorTree>
</root>
\end{tcblisting}
\end{minipage}

\vspace{3pt}

\begin{tcolorbox}[
  colback=gray!5, colframe=gray!60,
  title={\small\textbf{Operator Correction Prompt (HITL Re-Prompt Instance)}},
  fontupper=\ttfamily\scriptsize,
  boxrule=0.6pt, left=3mm, right=3mm, top=1.5mm, bottom=1mm]
\textrm{[HITL]}\; n Wrong order and wrong structure. The correct order is:
\texttt{Fallback(EnsureAccessible} for brown cube) first, then brown cube,
then grey cylinder, then blue cube. The Fallback's inner Sequence is the
ONLY place blue disc appears - it is a conditional recovery step, not a
standalone task. Remove any standalone blue disc Sequence from the top
level. \texttt{MainTask} must have exactly 4 children.
\end{tcolorbox}

\caption{%
  \textbf{Worked HITL correction example (Configuration~\#88).}
  \emph{Top:} the scene, with the blue disc stacked on the brown cube
  (blocking access), and the \order priority order.
  \emph{Left:} the LLM-generated tree commits two errors: (i)~the grey cylinder
  is placed at P1 instead of P3, and (ii)~the blue disc appears as a standalone
  top-level \texttt{Sequence} rather than inside the required
  \texttt{Fallback} guard, missing the \texttt{CheckStackSafety} structure
  entirely.
  \emph{Right:} the gold tree places \texttt{EnsureAccessible\_brown\_cube} as
  the first \texttt{Fallback} child of \texttt{MainTask}; the blue disc appears
  \emph{only} inside it as the conditional blocker-removal step, and the
  remaining three children follow the correct \order-derived priority order.
  \emph{Bottom:} the single operator correction prompt that drives the
  Domain-BT-LM to regenerate the gold tree in one pass. The operator either
  approves the BT with `y' or starts with `n' followed by the correction
  prompt.}
\label{fig:hitl_example}
\end{figure*}

\section{Closing the Knowledge-to-Action Chain}
\label{sec:vs_gpt}

Table~\ref{tab:vs_gpt} compares top-performing Domain-BT-LM variants against
GPT-4.1 references. All Domain-BT-LM configurations are \emph{closed-book}:
deployed without any retrieval at inference. All variants shown exceed the best
GPT-4.1 configuration ($\tauorder = 0.606$) by margins of $0.190$--$0.242$, and
the \tauorderu column makes the knowledge-to-action chain legible per family.

\begin{table}[t]
  \centering
  \caption{\textbf{Domain-BT-LMs vs.\ GPT-4.1} (82 novel samples). All
    Domain-BT-LM results are closed-book (no retrieval at inference); the
    GPT-4.1 reference uses its best structured pipeline (two-stage-cs).
    \tauorderu is the \orderspatial unseen concordance from
    Section~\ref{sec:orderspatial_results} (random baseline $= 0.500$),
    included to trace the knowledge-to-action chain.
    $^{\dagger}$SSR-CLM collapses under Instruct mounting ($\svr = 0$); the
    Matched result is shown. $^{\ddagger}$Mistral-RAFT/Inst.\ is the best
    Mistral result overall.}
  \label{tab:vs_gpt}
  \setlength{\tabcolsep}{2.8pt}
  \renewcommand{\arraystretch}{1.05}
  \footnotesize
  \begin{tabular}{llcccccc}
    \toprule
    \textbf{Fam.} & \textbf{KI / \slora} & \tauorderu & \svr & \sdhr
      & \tauorder & \parcmd & P-PAR \\
    \midrule
    \multicolumn{2}{l}{GPT-4.1 Vanilla}       & 0.44 & 96.3 & 100 & 0.480 & 0.420 & 22 \\
    \multicolumn{2}{l}{GPT-4.1 $+$ two-stage-cs} & -  & -  & - & 0.606 & -   & 10 \\
    \midrule
    \multicolumn{8}{l}{\textit{LLaMA-3.1-8B - Instruct \slora (closed-book)}} \\
    L-8B & Unified / Inst. & \textbf{0.81} & 98.8 & 98.3 & \textbf{0.848} & 0.724 & 49 \\
    L-8B & RAFT / Inst.    & 0.60 & 100  & 100  & 0.833 & \textbf{0.743} & \textbf{50} \\
    L-8B & \dkl / Inst.    & 0.58 & 100  & 96.6 & 0.815 & 0.718 & 46 \\
    L-8B & Overfit / Inst. & 0.54 & 100  & 100  & 0.813 & 0.702 & 46 \\
    \midrule
    \multicolumn{8}{l}{\textit{LLaMA-3.1-8B - Matched \slora (closed-book)}} \\
    L-8B & Overfit / Match. & 0.54 & 100  & 100  & \textbf{0.848} & \textbf{0.736} & 46 \\
    L-8B & SSR-CLM / Match.$^{\dagger}$ & 0.74 & 97.6 & 91.4 & 0.844 & 0.705 & 45 \\
    L-8B & \dkl / Match.   & 0.58 & 100  & 96.6 & 0.831 & 0.697 & 44 \\
    \midrule
    \multicolumn{8}{l}{\textit{Mistral-7B - best closed-book}} \\
    M-7B & RAFT / Inst.$^{\ddagger}$ & \textbf{0.73} & 98.8 & 98.3 & \textbf{0.826} & \textbf{0.708} & \textbf{38} \\
    M-7B & Normal / Match. & 0.55 & 100  & 96.6 & 0.816 & 0.580 & 33 \\
    \midrule
    \multicolumn{8}{l}{\textit{Qwen3-4B - best closed-book (Matched \slora)}} \\
    Q-4B & \dkl / Match.    & 0.72 & 100 & 96.6 & \textbf{0.813} & 0.629 & 36 \\
    Q-4B & Unified / Match. & 0.63 & 100 & 98.3 & 0.810 & 0.630 & 37 \\
    Q-4B & Normal / Match.  & 0.65 & 100 & 100  & 0.796 & \textbf{0.645} & \textbf{41} \\
    \bottomrule
  \end{tabular}
\end{table}

\textbf{LLaMA.} The chain is clearest for LLaMA. LLaMA-Unified, which
demonstrated the strongest world-model induction evidence
($\tauorderu = 0.81$, near-equal $+0.26$ gains on seen and unseen, inverted
gap), achieves the highest BT \tauorder ($0.848$) under Instruct mounting. The
compositional generalisation capacity that produced an inverted \orderspatial
gap - the model ordering novel scenes at least as well as seen ones -
transfers directly into compositionally correct BT structures. LLaMA-Overfit
ties at $0.848$ under Matched mounting despite $\tauorderu = 0.54$,
illustrating the deep-parametric-encoding pathway of
Section~\ref{sec:slora_results}: a different mechanism, equally effective in the
final metric. LLaMA-SSR-CLM / Matched ($0.844$, $\tauorderu = 0.74$) further
reinforces that strong \orderspatial concordance is recoverable under the right
mounting strategy even for a variant that collapses under Instruct.

\textbf{Mistral.} The chain closes cleanly for Mistral through a single
variant: RAFT. Mistral-RAFT was the only Mistral variant to show genuine
world-model induction evidence, an \emph{inverted} seen/unseen gap
($\tauorderu = 0.73 > 0.71$ seen, $\Delta^u = +0.21 > \Delta^s = +0.18$). It is
also the best Mistral BT result overall ($\tauorder = 0.826$, Instruct
mounting). The correspondence is precise: the one
Mistral variant whose benchmark profile reflects abstract-layer internalisation
rather than surface memorisation is also the one that best translates
internalised knowledge into executable plans.

\textbf{Qwen.} Qwen's BT results are uniformly high across Matched variants
($\tauorder \in [0.796, 0.813]$, a range of only $0.017$), making per-variant
ranking less informative than for LLaMA and Mistral. Qwen-\dkl leads narrowly
($0.813$, $\tauorderu = 0.72$), with Qwen-Unified close behind ($0.810$). The
flatness of the Qwen band reflects the architecture's built-in chain-of-thought
reasoning, which provides a strong structural prior for BT generation
regardless of which KI variant is mounted, partially decoupling BT quality from
the fine-grained differences in knowledge internalisation that \orderspatial
distinguishes.

\textbf{Broader observation.} GPT-4.1 vanilla achieves $\tauorder = 0.480$ on
BT generation and $0.44$ on \orderspatial, just below the $0.500$ random
baseline. The gap between GPT-4.1 with structured retrieval and the best
Domain-BT-LMs is not merely quantitative; it reflects a qualitative difference
in how priority knowledge is held. GPT-4.1 retrieves and applies rules at
inference; Domain-BT-LMs have internalised the interaction grammar
parametrically and express it directly through plan structure. For
domain-specific safety-critical manipulation, a properly trained open-source
SLM substantially outperforms a frontier proprietary model with retrieval,
while avoiding cloud latency, API cost, and data-privacy
exposure~\citep{nasrat2025rdmm}.

\section{Discussion}
\label{sec:discussion}

\paragraph{\order as evaluation infrastructure}
Without \order, a researcher applying a continual-learning pipeline to a
proprietary corpus cannot determine whether improvements represent genuine
knowledge acquisition or pre-training pattern retrieval. \order converts this
uncertain process into a validated one: candidate pipelines are evaluated on
\orderbench and \orderspatial, identifying those with small post-KI seen/unseen
gaps and large absolute improvements on both splits, and the shortlist is then
confirmed by post-merge \svr validation and downstream BT concordance. The
analogy to flight simulators is apt: the synthetic environment does not
replicate real domain semantics, but replicates the \emph{structural
properties} that make the real problem hard.

\paragraph{The retrieval--CPT division of labour}
Retrieval serves factual recall well, as non-spatial MCQ saturation confirms,
while compositional application of multiple rules to novel configurations
requires parametric internalisation via
CPT~\citep{yang2026fine,mallen2023not}. The same dissociation re-appears one
level down, in executable plan generation: factual priority resolution is
well-served by a condensed retrieval call, whereas compositional plan
expression - how to express an ordering as a syntactically correct,
semantically coherent BT - requires the parametric skill internalised by
\slora. Production KHTL systems should combine both: CPT for the compositional
reasoning procedure and the generation skill, retrieval for domain facts that
evolve.

\paragraph{The systematicity debate}
Unlike SCAN~\citep{lake2018scan}, where modern LLMs succeed largely because
command rules appear in pre-training corpora~\citep{drozdov2023compositional},
\order eliminates this confound entirely. LLaMA-Unified's $\tauorderu = 0.81$
with equal absolute improvement on both splits ($+0.26$ each) provides evidence
for abstract-layer internalisation over a genuinely novel axiomatic
physics~\citep{fodor1988connectionism}, and the same variant produces the
strongest executable plans.

\paragraph{Skill--knowledge disentanglement as a design principle}
The Instruct mounting strategy - training \slora from the instruct parent and
mounting onto any KI-variant - produces competitive and in some cases
superior results compared to Matched mounting. This validates the
disentanglement principle of Eq.~\ref{eq:domainbt}: the skill of BT generation
is largely separable from the domain knowledge of \orderworld, and the two can
be composed at inference via task arithmetic without joint training. The
LLaMA-Unified result illustrates this most sharply: its highest BT performance
($0.848$) occurs precisely under Instruct mounting (clean disentanglement),
while Matched mounting drops to $0.601$. The practical consequence is that a
single \slora adapter trained from the instruct parent can be shared across all
KI-variants of a given model family, substantially reducing adaptation cost for
KHTL deployments. A complementary observation reinforces the principle from
another angle: LLaMA-RAFT, whose \tauorderu of $0.60$ is only moderately above
the random baseline and whose MCQ aggregate barely clears the zero-shot floor,
achieves BT $\tauorder = 0.833$ under Instruct mounting. Compatibility of the
CPT objective's \emph{conditioning geometry} with the downstream skill is
therefore a meaningful predictor even when absolute knowledge retention is
modest. This observation is architecture-specific: it does not hold for Qwen.

\paragraph{Constrained start vs.\ full grammar decoding}
That full CFG-guided decoding degrades BT quality while prefix forcing helps
has a principled explanation. \slora training implicitly learns a joint
distribution over valid XML structures; CFG-guided decoding imposes a
\emph{marginal} constraint at each step that may be inconsistent with the
\slora-trained conditional distribution, forcing probability mass onto
grammatically valid but contextually inconsistent tokens. Prefix forcing
commits the initial state (the opening tag) without further restricting
subsequent generation, allowing the fine-tuned distribution to govern the
remaining output. The primary failure mode in BT generation is therefore the
initial output-mode selection (prose vs.\ XML), not mid-sequence structural
errors - a finding with practical implications for any structured generation
task where fine-tuned models must switch output modalities.

\section{Limitations}
\label{sec:limitations}
Gold labels are generated using GPT-4.1, which is simultaneously an evaluated
frontier baseline; complex scenes may inherit subtle oracle biases, and a fully
symbolic generation procedure would remove this dependency. Structural validity
and uniqueness are formally guaranteed (Appendix~\ref{app:order_theory}). Appendix~\ref{app:general_benchmarks} measures general-capability
retention but not real-world physical or spatial judgement specifically.
\order is a synthetic environment
with clean, fully specified physics, so strong performance is necessary but not
sufficient evidence of real-world transfer; the HITL framework is evaluated on
the simulated \texttt{iiwa7}, with Doosan A0509S hardware deployment planned.
The \slora training set (424 pairs) is relatively small, though scaling via
additional scene generation is straightforward within the \orderworld pipeline.
Finally, the precise conditions under which knowledge ingestion provides
decisive gains over pure skill fine-tuning remain open, and are likely to
depend on the semantic difficulty of the priority reasoning required rather
than on BT format compliance alone.

\section{Conclusion}
\label{sec:conclusion}

We introduced \order, a fictitious-world framework for contamination-free
evaluation of knowledge ingestion in domain-adaptive embodied AI, comprising
\orderworld (a 342k-token synthetic corpus), \orderbench (500 MCQ), and
\orderspatial (a 253-scene seen/unseen priority-ordering task with formally
guaranteed unique gold labels). Retrieval saturates factual recall
($\geq\!90\%$) but leaves compositional spatial ordering substantially harder.
GPT-4.1 without domain adaptation scores below random on \orderspatial
($\tauorder = 0.441$), confirming genuinely novel physics. After CPT,
best-adapted SLMs show $+0.18$--$+0.26$ \tauorder on both seen and unseen
splits with gaps $\leq\!0.05$, providing strong evidence for abstract-layer
internalisation over memorisation.

We then carried the framework through to a robot. Domain knowledge and the
skill of BT generation are separable capabilities: KI-models with strong
\orderbench and \orderspatial performance consistently fail to generate
executable BTs without task-specific \slora fine-tuning. Injecting a full
assembled ontology context into a fine-tuned SLM causes context hijacking,
collapsing \svr to $0\%$; a two-stage pipeline with a condensed
priority-reasoning call, optionally with constrained-start prefix forcing,
recovers functional outputs, while full grammar-constrained decoding at every
step degrades quality. A HITL framework on the simulated \texttt{iiwa7} arm
demonstrates that single-correction re-prompting reliably produces compilable,
correctly ordered trees. Across all configurations, the best closed-book
Domain-BT-LMs substantially outperform GPT-4.1 with structured retrieval
($\tauorder = 0.848$ vs.\ $0.606$).

The connection between the benchmark and the robot is tight and non-circular:
the variants that demonstrated the strongest world-model induction evidence
(LLaMA-Unified, with near-equal seen/unseen gains and an inverted post-KI gap;
Mistral-RAFT, the one variant in its family with an inverted gap) are precisely
those that achieve the strongest BT generation results, and \orderspatial
compositional concordance is confirmed as a stronger predictor of downstream
plan quality than MCQ aggregate across three model families and dozens of
experimental configurations. \order is released as pipeline-agnostic community
infrastructure for KHTL robotic system development.

\section*{Acknowledgements}
The authors sincerely thank the HPCE facility at IIT Madras (AQUA cluster),
whose computing resources were essential for all continual pre-training and
fine-tuning experiments.

\bibliographystyle{IEEEtran}
\bibliography{BIB}

\appendices

\section{Retrieval Pipeline Specifications}
\label{app:rag}

The retrieval baselines reported in Table~\ref{tab:baseline_rag} and
Section~\ref{sec:ragresults} are not naive retrieve-and-read systems. This
appendix documents their full architectures to establish that the persistent
difficulty on compositional spatial priority ordering is a fundamental
limitation of retrieval-augmented inference, not an artefact of implementation
quality.

\subsection{MCQ benchmark retrieval: global hybrid retrieval}
\label{app:rag_mcq}

The pipeline used for Table~\ref{tab:baseline_rag} is a global hybrid retriever
searching the entire \orderworld corpus (342{,}069 tokens) without section
filtering.

\textbf{Stage 1: parallel candidate retrieval.}
\begin{itemize}[leftmargin=1.4em,itemsep=1pt,topsep=2pt]
  \item \textbf{Sparse (BM25 Okapi):} BM25-weighted scoring; top 50 candidates
    retained.
  \item \textbf{Dense (all-MiniLM-L6-v2):} 384-d sentence embedding via
    dot-product similarity; top 50 candidates retained.
\end{itemize}

\textbf{Stage 2: reciprocal rank fusion.}
\begin{equation}
  \mathrm{RRF}(d)
    = \sum_{i \,\in\, \{\mathrm{BM25,\,dense}\}}
      \frac{1}{k + \mathrm{rank}_i(d)}, \qquad k = 60.
  \label{eq:rrf}
\end{equation}
The top-5 passages by RRF score are concatenated and prepended to the MCQ
question; an identical template is applied to all models.

\subsection{BT generation retrieval: structured ontology-guided index}
\label{app:rag_bt}

The BT retrieval pipeline maintains separate exact-match and vector indices for
each knowledge type.

\textbf{Multi-type knowledge decomposition.}
\begin{itemize}[leftmargin=1.4em,itemsep=1pt,topsep=2pt]
  \item \textbf{Exact-match stores} ($O(1)$ lookup): intrinsic semantics
    indexed by primitive token; pairwise laws indexed by unordered primitive
    pair; counterfactual constraints indexed by primitive pair.
  \item \textbf{FAISS vector index} (all-MiniLM-L6-v2): higher-order regimes,
    dominance axioms, and QnA entries.
\end{itemize}

\textbf{Weighted context assembly.} Retrieved chunks are ranked within a
4{,}000-token budget according to Table~\ref{tab:rag_weights}. For the
two-stage pipelines of Section~\ref{sec:four_pipelines}, the assembled context
is further condensed to $\leq$800 tokens (top-5 counterfactual, axiom, and
inter-object pairwise chunks) before the Stage~1 priority-reasoning call.

\begin{table}[ht]
  \centering
  \caption{Knowledge-type weighting used for context assembly.}
  \label{tab:rag_weights}
  \renewcommand{\arraystretch}{1.12}
  \footnotesize
  \begin{tabular}{lcc}
    \toprule
    \textbf{Knowledge Type} & \textbf{Weight} & \textbf{Order} \\
    \midrule
    Counterfactual constraints      & 1.00 & 1st \\
    Pairwise inter-object (contact) & 0.90 & 2nd \\
    Dominance axioms                & 0.85 & 3rd \\
    Higher-order emergent regimes   & 0.80 & 4th \\
    Pairwise intra-object           & 0.70 & 5th \\
    Intrinsic primitive semantics   & 0.55 & 6th \\
    QnA cross-format entries        & 0.40 & 7th \\
    \bottomrule
  \end{tabular}
\end{table}

\textbf{Few-shot conditioning.} Two training-set examples are retrieved per
test scene by structural similarity. Spatial configuration context is
deliberately excluded (\emph{no-spatial} mode), as this ablation consistently
outperformed pipelines that included spatial scene descriptions.

\subsection{On the sophistication of the retrieval baselines}
The claim is not that a poorly engineered retrieval system fails on
\orderspatial, but that retrieval-augmented inference has a structural
limitation for compositional planning: correctly retrieving relevant rules does
not perform their joint application to a novel scene. Composing multiple rules
simultaneously requires the interaction grammar to be available as a parametric
procedure; CPT internalises this grammar into model weights.

\section{Order-Theoretic Grounding}
\label{app:order_theory}

\begin{definition}[Precedence Poset]
A precedence poset $\poset = (\objects, \prec)$ with objects
$\objects = \{o_1,\ldots,o_n\}$ where $o_i \prec o_j$ means $o_i$ must be
manipulated before $o_j$; $\prec$ is irreflexive, transitive, and asymmetric.
\end{definition}

\begin{definition}[Linear Extension]
A bijection $\pi:\{1,\ldots,n\}\to\objects$ such that
$o_i\prec o_j \Rightarrow \pi^{-1}(o_i)<\pi^{-1}(o_j)$. When $\prec$ is a
strict total order, exactly one linear extension exists.
\end{definition}

The ontology induces a criticality score
$\score(o_i,\config) = \phi(\mathbf{a}(o_i))
+ \sum_{j\neq i}\psi(\mathbf{a}(o_i), \mathbf{a}(o_j), r(o_i,o_j))$,
where $\phi$ encodes intrinsic dominance and $\psi$ encodes pairwise and
higher-order interaction contributions. The priority relation is
$o_i\prec o_j\Leftrightarrow\score(o_i,\config)>\score(o_j,\config)$.

\textbf{Pillar 1 (Closed-World Completeness).}
Under the Closed World Assumption~\citep{reiter1978closed} and Unique Name
Assumption~\citep{brachman2004kr}, the knowledge base
$K=\mathcal{T}\cup\mathcal{X}$ satisfies
$\forall o_i\neq o_j: K\models(o_i\prec o_j)\vee K\models(o_j\prec o_i)$.

\textbf{Pillar 2 (Strict Total Order by Construction).}
Injectivity of $\score$ is guaranteed by: no two objects in any benchmark scene
share an identical four-axis signature, and a deterministic lexicographic
tiebreaker~\citep{fishburn1974lexicographic} (Colour $\succ_{\mathrm{lex}}$
Shape $\succ_{\mathrm{lex}}$ Size $\succ_{\mathrm{lex}}$ Texture) is encoded in
the ontology contract.

\begin{proposition}
\label{prop:total_order}
Under the above conditions, $\prec$ is a strict total order on $\objects$ for
every \orderspatial scene, so $|\mathcal{L}(\poset)|=1$ and the gold sequence
$\pi^*$ is unique.
\end{proposition}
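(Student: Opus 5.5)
The plan is to prove the proposition in three moves: establish irreflexivity and transitivity of $\prec$ from its definition through $\score$, establish totality (connexity on distinct objects) from Pillars~1 and~2, and then derive uniqueness of the linear extension from totality.

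\textbf{Step 1 (strict partial order).} Fix a scene $\config$. By definition, $o_i\prec o_j$ iff $\score(o_i,\config)>\score(o_j,\config)$. In other words, $\prec$ is the pullback of the strict order $>$ on $\mathbb{R}$ along the map $o\mapsto\score(o,\config)$. Irreflexivity, asymmetry and transitivity of $\prec$ are then inherited directly from $>$. This already makes $\poset=(\objects,\prec)$ a precedence poset in the sense of the Definition.

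\textbf{Step 2 (totality).} I need, for all $o_i\neq o_j$, that either $o_i\prec o_j$ or $o_j\prec o_i$. Under the score definition this is equivalent to $\score(o_i,\config)\neq\score(o_j,\config)$, i.e.\ injectivity of $\score(\cdot,\config)$ on $\objects$. I will argue this from Pillar~2 in two parts:
\begin{enumerate}[leftmargin=1.4em,itemsep=1pt,topsep=2pt,label=(\alph*)]
  \item No two objects in a scene share a four-axis signature, so the lexicographic key (Colour, Shape, Size, Texture) is injective on $\objects$.
  \item The ontology contract resolves every score tie by this key. Formally, I replace $\score$ by the pair $(\score,\mathrm{lex})$ ordered lexicographically; since $\mathrm{lex}$ is injective, this composite order is strict and total.
\end{enumerate}
Pillar~1 supplies the semantic counterpart: it guarantees that for each pair, $K$ entails one of the two precedences. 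So the relation determined by the knowledge base agrees with the score-plus-tiebreak order and leaves no pair undecided.

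\textbf{Step 3 (uniqueness).} For a strict total order, suppose $\pi,\pi'$ are two linear extensions. The first element of each must be the unique $\prec$-minimal object, since totality gives exactly one minimum. Inducting on $n$ after removing that element yields $\pi=\pi'$. Existence follows by listing objects in increasing $\prec$ order, so $|\mathcal{L}(\poset)|=1$ and $\pi^*$ is well defined.

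\textbf{Main obstacle.} The difficult point is Step~2. As literally stated, $\prec$ is defined by strict score inequality, so injectivity of $\score$ is not automatic: distinct signatures do not by themselves force distinct real scores. The tiebreaker has to be read as part of the definition of $\prec$ rather than as a property of $\score$. My first move is to make this explicit by redefining $\prec$ as the lexicographic order on $(\score,\mathrm{lex})$, and to check that Steps~1 and~3 go through unchanged for the redefined relation.
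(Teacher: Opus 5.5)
Your proof is correct and follows the paper's own argument. Irreflexivity and transitivity are inherited from $>$ on $\mathbb{R}$, totality comes from ``either the scores differ or the lexicographic tiebreaker on distinct four-axis signatures decides'', and uniqueness of the linear extension follows from totality; the paper simply cites the standard fact for uniqueness, while you prove it by removing the minimum and inducting.

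Your explicit redefinition of $\prec$ as the lexicographic order on $(\score,\mathrm{lex})$ is a genuine tightening. The paper checks irreflexivity and transitivity for the purely score-defined relation but invokes the tiebreaker for totality, without noting that this is a refined relation. Your Pillar~1 ``agreement'' remark, by contrast, is not needed and does not follow from the stated hypotheses, so you can drop it.
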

\begin{proof}
\textbf{Irreflexivity.} $o_i\prec o_i$ requires
$\score(o_i,\config)>\score(o_i,\config)$, a contradiction.
\textbf{Transitivity.} Follows from transitivity of $>$ on $\mathbb{R}$.
\textbf{Totality.} For distinct $o_i,o_j$, either the scores differ (direct
comparability) or the deterministic lexicographic tiebreaker resolves the tie;
it is a strict total order on $\mathcal{A}$ guaranteed to resolve every tie
since no two objects share an identical four-axis signature.
\textbf{Uniqueness.} A poset admits a unique linear extension iff its partial
order is a strict total order~\citep{davey2002lattices}; since $\prec$ is
total, $|\mathcal{L}(\poset)|=1$.
\end{proof}

\textbf{Pillar 3 (Oracle Methodology).}
Gold orders are generated by GPT-4.1 via prompt-forced, citation-grounded
reasoning over all activated context blocks, cross-validated against the axiom
set with human review for unresolved activations (full specifications in
Appendix~\ref{app:gen:oracle}).

\section{PPL Profiles and the PPL--Spatial Dissociation}
\label{app:ppl}

Table~\ref{tab:ppl} reports perplexity and training loss for all \klora
variants on a held-out split of $\corpus$. Lower PPL broadly correlates with
higher $S_{\mathrm{agg}}$ as a rough monitoring signal, but two critical
dissociations establish the limit of that signal.

SSR-CLM achieves the lowest PPL across all three families (Mistral: 1.52,
LLaMA: 2.52, Qwen: 4.62) yet does not achieve correspondingly strong spatial
reasoning: Mistral-SSR-CLM achieves the weakest spatial MCQ ($43.1\%$) among
all non-collapsed Mistral variants. The loss-masked SSR objective drives format
conformity on spatial entries rather than rule internalisation under the
all-token CLM signal.

Mistral-Unified collapses catastrophically ($S_{\mathrm{agg}} = 1.8\%$) while
retaining PPL of 2.15, with neither perplexity nor loss curves giving any
warning before post-merge evaluation. The most likely cause is directional
interference in LoRA merging when combining CLM, structured SFT, and DPO
objectives simultaneously. These two cases jointly establish that post-merge
evaluation on \orderbench is a necessary validation step.

\begin{table}[ht]
  \centering
  \caption{PPL and training loss for all \klora variants. SSR-CLM achieves the
    lowest PPL but dissociates from spatial performance. Mistral-Unified
    collapses silently.}
  \label{tab:ppl}
  \setlength{\tabcolsep}{4pt}
  \renewcommand{\arraystretch}{1.1}
  \footnotesize
  \begin{tabular}{l cc cc cc}
    \toprule
    & \multicolumn{2}{c}{\textbf{M-7B}}
    & \multicolumn{2}{c}{\textbf{L-8B}}
    & \multicolumn{2}{c}{\textbf{Q-4B}} \\
    \cmidrule(lr){2-3}\cmidrule(lr){4-5}\cmidrule(lr){6-7}
    \textbf{Variant} & PPL & Loss & PPL & Loss & PPL & Loss \\
    \midrule
    Normal    & 3.82 & 0.230 & 7.22 & 0.820 & 8.89 & 1.110 \\
    Overfit   & 3.10 & 0.021 & 6.04 & 0.016 & 6.79 & 0.059 \\
    \dkl      & 2.07 & 0.027 & 4.72 & 0.015 & 6.06 & 0.085 \\
    RAFT      & 3.44 & 0.823 & 5.27 & 1.660 & 6.53 & 2.650 \\
    SSR-CLM   & \textbf{1.52} & 0.003
              & \textbf{2.52} & 0.002
              & \textbf{4.62} & 0.001 \\
    Unified   & 2.15 & 0.003 & 4.73 & 0.008 & 7.19 & 0.017 \\
    Combined  & 2.79 & NA    & 3.07 & NA    & 6.99 & NA    \\
    \bottomrule
  \end{tabular}
\end{table}

\section{\orderspatial Scene Complexity Distribution}
\label{app:complexity}

\begin{table*}[ht]
\centering
\footnotesize
\setlength{\tabcolsep}{6pt}
\renewcommand{\arraystretch}{1.12}
\caption{Complexity distribution across \orderspatial splits. Seen and unseen
  are statistically comparable across all axes (Welch $t$-tests: all $|t|<2$,
  $p>0.05$). The mild four-object skew in unseen ($30.6\%$ vs.\ $18.6\%$)
  slightly disfavours unseen performance.}
\label{tab:complexity_distribution}
\begin{tabular}{lcccc c}
\toprule
& \multicolumn{2}{c}{\textbf{Seen} ($n=70$)}
& \multicolumn{2}{c}{\textbf{Unseen} ($n=183$)} & \\
\cmidrule(lr){2-3}\cmidrule(lr){4-5}
\textbf{Metric} & \textbf{Mean} & \textbf{Std}
                & \textbf{Mean} & \textbf{Std} & \textbf{$t$} \\
\midrule
\multicolumn{6}{l}{\textit{Structural}} \\
Num.\ objects        & 3.29 & 0.97 & 3.53 & 0.97 & $-1.76$ \\
Unique primitives    & 9.53 & 2.06 & 9.63 & 2.47 & $-0.34$ \\
Spatial relations    & 2.29 & 0.97 & 2.45 & 0.98 & $-1.19$ \\
\midrule
\multicolumn{6}{l}{\textit{Physics (activation counts)}} \\
Pairwise rules       & 21.33 & 11.21 & 23.13 & 10.36 & $-1.17$ \\
Higher-order regimes & 1.33  & 1.12  & 1.30  & 1.29  & $\phantom{-}0.18$ \\
Axioms               & 0.66  & 1.75  & 0.96  & 2.53  & $-1.07$ \\
Counterfactuals      & 8.47  & 4.42  & 8.90  & 4.71  & $-0.68$ \\
\midrule
\multicolumn{6}{l}{\textit{Composite}} \\
\texttt{complexity\_score}$^\dagger$
                     & 68.99 & 32.36 & 73.45 & 30.74 & $-0.99$ \\
\midrule
\multicolumn{6}{l}{\textit{Higher-order regime presence}} \\
Scenes with $\geq\!1$ active regime
& \multicolumn{2}{c}{53/70 (75.7\%)}
& \multicolumn{2}{c}{127/183 (69.4\%)}
& $\phantom{-}0.18$ \\
\bottomrule
\end{tabular}
\vspace{3pt}
\begin{minipage}{0.8\textwidth}
\footnotesize
$^\dagger$\,$\texttt{complexity\_score} = n_\text{pw} + 3\,n_\text{ho}
+ 2\,n_\text{ax} + 5\,n_\text{cf}$.
All $t$-statistics are two-sided Welch $t$-tests; all $p > 0.05$.
\end{minipage}
\end{table*}

\subsection{Statistical validity of \orderspatial results}

\textbf{Finite-sample uncertainty bounds.} By Hoeffding's
inequality~\citep{hoeffding1963}, the distribution-free 95\% confidence
half-width is $\varepsilon^* = \sqrt{\ln(40)/2n}$. For $n_s = 70$ and
$n_u = 183$: $\varepsilon^*_s = 0.162$, $\varepsilon^*_u = 0.100$. The
LLaMA-Unified improvement $\Delta\tau = {+}0.26$ exceeds the conservative
Hoeffding half-width by $2.6\times$.

\textbf{Improvement significance against random baseline.} Under
$H_0\colon \tau \le 0.500$ with $\sigma_{\max} = 0.5$, the one-sided
$z$-statistic is $z = (\hat\tau - 0.500)/(\sigma_{\max}/\!\sqrt{n})$.
Table~\ref{tab:zsig} reports results; all six post-KI results are significant
at $p < 0.05$. GPT-4.1 unseen: $z = -2.30$, $p = 0.011^{*}$, confirming that
the frontier model's ordering is significantly anti-correlated with the gold
sequence.

\begin{table}[ht]
\centering
\caption{One-sided $z$-test results. $^{*}p<0.05$, $^{**}p<0.01$,
  $^{***}p<0.001$.}
\label{tab:zsig}
\footnotesize
\renewcommand{\arraystretch}{1.08}
\begin{tabular}{llcccc}
\toprule
Family & Variant & $\hat\tau_s$ & $z_s$ & $\hat\tau_u$ & $z_u$ \\
\midrule
M-7B    & RAFT    & 0.710 & $+3.51^{***}$ & 0.730 & $+6.20^{***}$ \\
L-8B    & Unified & 0.770 & $+4.52^{***}$ & 0.810 & $+8.39^{***}$ \\
Q-4B    & Combined& 0.790 & $+4.85^{***}$ & 0.740 & $+6.49^{***}$ \\
\midrule
GPT-4.1 & Vanilla & 0.441 & $-0.99$ & 0.415 & $-2.30^{*}$ \\
\bottomrule
\end{tabular}
\end{table}

\section{CPT Algorithm Specifications}
\label{app:algorithms}

All variants share three design invariants: CPT is applied to the base model
$\theta_B$; the instruct model's embedding matrix is transplanted into
$\theta_B$ before training; and the trained adapter $\tau$ is merged via task
arithmetic, $\theta_{KI} = \theta_I + \lambda\,\tau$ with $\lambda = 0.5$.
Shared LoRA hyperparameters: $r=64$, $\alpha=128$, dropout $=0$, all linear
layers, lr $5\!\times\!10^{-5}$, 10 epochs, effective batch size 4.

\begin{algorithm}[ht]
\caption{\dkl Knowledge Ingestion}
\label{alg:dkl}
\begin{algorithmic}[1]
\Require Base model $\theta_B$; instruct model $\theta_I$; corpus $\corpus$;
  rank $r$; scale $\alpha$; lr $\eta$; epochs $E$; max sequence length $L$
\Ensure  $\theta_{KI}$
\State \textbf{// Phase 1 -- \dkl embedding replacement}
\State Load $\theta_I$ onto CPU in bfloat16
\If{$|\mathrm{vocab}(\theta_I)| \neq |\mathrm{vocab}(\theta_B)|$}
  \State Resize $\theta_B$.\texttt{embed\_tokens} and $\theta_B$.\texttt{lm\_head}
\EndIf
\State $\theta_B$.\texttt{embed\_tokens.weight} $\leftarrow$
  $\theta_I$.\texttt{embed\_tokens.weight.clone()}
\State $\theta_B$.\texttt{lm\_head.weight} $\leftarrow$
  $\theta_I$.\texttt{lm\_head.weight.clone()}
\State Freeze both layers; free $\theta_I$; flush CUDA
\State \textbf{// Phase 2 -- LoRA init}
\State Attach LoRA adapter $\tau$ to $\theta_B$
\State \textbf{// Phase 3 -- Dataset: sequence packing}
\State Tokenise $\corpus$; partition into chunks of length $L$
\State Labels $\leftarrow$ input IDs (all-token CLM loss)
\State \textbf{// Phase 4 -- Training}
\For{epoch $e = 1, \ldots, E$}
  \For{each packed chunk $\mathbf{x}$}
    \State Update $\tau$ via AdamW on $\mathcal{L}_{\mathrm{CLM}}$
  \EndFor
\EndFor
\State $\theta_{KI} \leftarrow \theta_I + 0.5\cdot\tau$
\State \Return $\theta_{KI}$
\end{algorithmic}
\end{algorithm}

\begin{algorithm}[ht]
\caption{RAFT Spatial Knowledge Ingestion}
\label{alg:raft}
\begin{algorithmic}[1]
\Require $\theta_B$; $\theta_I$; $\corpus_{sp}$; retrieval index $\mathcal{R}$;
  top-$k$; hard-negative fraction $p_{\mathrm{neg}}$
\Ensure  $\theta_{KI}$
\State Apply Phases 1--2 of Algorithm~\ref{alg:dkl}
\For{each $s \in \corpus_{sp}$}
  \State $P^{+} \leftarrow \mathcal{R}.\mathrm{retrieve}(s,k)$
  \If{Bernoulli$(p_{\mathrm{neg}})$}
    \State $s' \sim \corpus_{sp} \setminus \{s\}$;
      $P^{-} \leftarrow \mathcal{R}.\mathrm{retrieve}(s', k)$
    \State Add $([\mathrm{scene}(s)\,;\,P^{-}],\;\mathrm{analysis}(s))$
  \Else
    \State Add $([\mathrm{scene}(s)\,;\,P^{+}],\;\mathrm{analysis}(s))$
  \EndIf
\EndFor
\State Format as instruction-following; mask loss to assistant turn
\State Apply the training and merge phases of Algorithm~\ref{alg:dkl}
\State \Return $\theta_{KI}$
\end{algorithmic}
\end{algorithm}

\begin{algorithm}[ht]
\caption{SSR-CLM Knowledge Ingestion}
\label{alg:ssrclm}
\begin{algorithmic}[1]
\Require $\theta_B$; $\theta_I$; $\corpus_{ns}$; $\corpus_{sp}$
\Ensure  $\theta_{KI}$
\State Apply Phases 1--2 of Algorithm~\ref{alg:dkl}
\State \textbf{// Phase 3A -- Packed CLM on $\corpus_{ns}$}
\State Labels $\leftarrow$ input IDs (all-token CLM loss)
\State \textbf{// Phase 3B -- SSR on $\corpus_{sp}$}
\For{each $s \in \corpus_{sp}$}
  \State Generate four masked examples:
    (i)~full eight-step analysis;
    (ii)~interface identification;
    (iii)~dominance conflict resolution;
    (iv)~priority ordering output
  \State Mask labels on system$+$user tokens; loss on assistant tokens only
\EndFor
\State Repeat CLM chunks to equalise per-epoch coverage; interleave and shuffle
\State Train $\tau$; $\theta_{KI} \leftarrow \theta_I + 0.5\cdot\tau$
\State \Return $\theta_{KI}$
\end{algorithmic}
\end{algorithm}

\begin{algorithm}[ht]
\caption{Unified Knowledge Ingestion}
\label{alg:unified}
\begin{algorithmic}[1]
\Require $\theta_B$; $\theta_I$; $\corpus_{ns}$; $\corpus_{sp}$;
  $\beta\!=\!0.1$; $p_{\mathrm{dpo}}\!=\!0.3$
\Ensure  $\theta_{KI}$
\State Apply Phases 1--2 of Algorithm~\ref{alg:dkl}
\State Build packed CLM from $\corpus_{ns}$
\For{each $s \in \corpus_{sp}$}
  \State Add distractor-robustness, counterfactual-delta, context-faithful,
    and explanation-mapping examples
  \If{Bernoulli$(p_{\mathrm{dpo}})$}
    \State Add DPO pair: chosen $=$ oracle-faithful, rejected $=$
      parametric-bias
  \EndIf
\EndFor
\State Balance and shuffle dataset
\For{each mini-batch $\mathcal{B}$}
  \State $\mathcal{L}_{\mathrm{CE}}$ on non-DPO examples
  \If{DPO examples present}
    \State $\mathcal{L}_{\mathrm{DPO}}$ (reference-free)
  \EndIf
  \State Update $\tau$ on mean loss
\EndFor
\State $\theta_{KI} \leftarrow \theta_I + 0.5\cdot\tau$
\State \Return $\theta_{KI}$
\end{algorithmic}
\end{algorithm}

\begin{algorithm}[ht]
\caption{Combined: SSR-CLM adapter $+$ RAFT adapter}
\label{alg:combined}
\begin{algorithmic}[1]
\Require $\theta_I$; $\tau_{\mathrm{SSR}}$; $\tau_{\mathrm{RAFT}}$
\Ensure  $\theta_{KI}$
\State $\theta_{KI} \leftarrow \theta_I
  + 0.5\cdot\tau_{\mathrm{SSR}} + 0.5\cdot\tau_{\mathrm{RAFT}}$
\State \Return $\theta_{KI}$
\end{algorithmic}
\end{algorithm}

\section{Benchmark Generation Pipeline}
\label{app:generation}

\orderbench is produced by a fully deterministic, multi-stage GPT-4.1 oracle
pipeline enforcing closed-world consistency, Bloom-level cognitive depth, and
answer-option discriminability.

\subsection{\orderworld corpus generation}
\label{app:gen:corpus}

\orderworld is constructed as a six-layer knowledge pyramid; every downstream
generation call receives the complete text of all upstream layers as injected
context. All calls use GPT-4.1; temperatures are low ($T\!\in\![0.2,0.4]$) for
structurally constrained layers, raised to $T\!=\!0.6$ for intrinsic-semantics
generation to produce distinct physical personalities across the primitives.

\subsubsection{Ontological Contract (Layer 0.0)}
\texttt{generate\_section\_00\_contract.py} generates the ontological contract,
defining the visual grammar $\mathcal{G}$ and encoding four inviolable
invariants: the Closed World Assumption~\citep{reiter1978closed}; the Unique
Name Assumption~\citep{brachman2004kr}; prohibition on semantic extension by
any downstream module; and the lexicographic tiebreaker guaranteeing the strict
total order of Proposition~\ref{prop:total_order}.

\subsubsection{Primitive Vocabulary (Layer 0.1)}
\texttt{generate\_section\_01\_primitives.py} produces a formal,
interaction-free symbol catalogue across five sub-sections (shape, colour,
size, texture, spatial relation). A strict closed-set rule is enforced:

\begin{promptbox}
Rules:\\
- You must define every primitive listed above.\\
- You must NOT introduce any primitive, synonym, subtype,\\
\phantom{- }or alternative label beyond these sets.\\
- You must NOT remove or merge primitives.\\
- You must NOT rename primitives.\\
- Each primitive must be treated as an atomic symbol.\\
\\
Global rules:\\
- No interaction semantics\\
- No examples involving actions or outcomes\\
- No reference to risk, safety, or hazards\\
- Use formal, contract-style language
\end{promptbox}

\subsubsection{Intrinsic Semantics (Layer 0.2)}
Per-primitive, per-axis generation with hard-coded semantic axis assignments:
colours (4 axes: Flammability, Chemical Reactivity, Biodegradability, Hazard
Class); shapes (4 axes: Load Distribution, Mechanical Stability, Containment,
Fracture Initiation); sizes (3 axes: Surface Dominance, Inertial Effect,
Threshold Sensitivity); textures (4 axes: Permeability, Reaction Amplification,
Wear Visibility, Contamination Retention); 77 assignments in total. The system
prompt enforces a declarative physical-assertion style:

\begin{promptbox}
You are writing the FOUNDATIONAL PHYSICS of a\\
SELF-CONTAINED WORLD.\\
\\
This is NOT a description of an ontology.\\
This IS the ontology.\\
\\
LANGUAGE RULES:\\
- Speak as if these laws are universally true\\
- Use declarative physical assertions only\\
- No epistemic framing (``defined as'', ``treated as'')\\
- No meta-commentary\\
\\
FORBIDDEN:\\
- Optics or perception; Real-world chemistry\\
- Probabilistic language; Interactions or agents\\
- Meta phrases (``ontology'', ``framework'', ``mapping'')\\
\\
This text will be read as PHYSICS, not documentation.
\end{promptbox}

\subsubsection{Pairwise Interaction Laws (Layer 0.3)}
86 curated pairs spanning seven interaction categories; each call receives the
full Layer~0.2 intrinsic physics of both primitives. Interactions are derived
from field coupling only:

\begin{promptbox}
CRITICAL CONSTRAINTS:\\
- The physics from Section~0.2 is IMMUTABLE\\
- You are deriving INTERACTION EFFECTS from established\\
\phantom{- }intrinsic fields\\
- Interactions arise from FIELD COUPLING, not new invention\\
\\
FORBIDDEN:\\
- New fundamental properties not in Section~0.2\\
- Probabilistic or uncertain language\\
- Scenarios, agents, or narratives
\end{promptbox}

\subsubsection{Higher-Order Emergent Regimes (Layer 0.4)}
37 curated triadic combinations spanning six regime families. Each call
receives the intrinsic physics of all three primitives \emph{and} all three
pairwise interaction laws. The oracle must derive a regime not reducible to
pairwise sums:

\begin{promptbox}
CRITICAL REQUIREMENTS:\\
- The result MUST NOT be reducible to a sum of pairwise laws\\
- At least one pairwise interaction MUST become secondary,\\
\phantom{- }suppressed, or gated\\
- All effects MUST arise from coupling, saturation,\\
\phantom{- }resonance, or masking of existing fields
\end{promptbox}

\subsubsection{Axioms, Counterfactuals, QnA and Paraphrases
  (Layers 1, 2, 3, 5)}
80 axiom themes spanning ten axiom types; three Inference meta-axioms close the
system. Counterfactuals sample from the latent interaction space (pairs and
triads not covered by Layers 0.3--0.4); each call receives only Layer~0.2
intrinsic physics. Layers~3 and~5 produce surface-varied re-expressions
following the multi-format CPT principle of~\citep{allen2023physlm}; no new
semantic content is introduced.

\subsubsection{Spatial Configurations (Layer 4)}
Before the oracle call, a deterministic five-step semantic activation graph
identifies relevant corpus entries: (i)~intra-object pairs per object's
attribute axes; (ii)~inter-object pairs at active contact interfaces;
(iii)~higher-order regimes for applicable triads; (iv)~axioms whose source
contexts are fully satisfied; (v)~counterfactuals whose canonical pair appears
in active sets. The oracle (GPT-4.1, $T\!=\!0.4$) receives the scene
description and all five activated context blocks, then outputs a structured
eight-step analysis culminating in an explicit \textsc{Manipulation Priority
Guidance} section.

\subsection{MCQ generation prompts}
\label{app:gen:mcq}

Shared system prompt for all MCQ generation calls:

\begin{promptbox}
You are generating evaluation questions for a closed-world\\
physics benchmark.\\
The physics described in the context is the ONLY physics\\
that exists.\\
You must not import real-world chemistry, common-sense\\
ordering priors, or any knowledge not present in the\\
provided corpus text.\\
Every question must have exactly one correct answer\\
directly recoverable from the provided context.\\
Every distractor must be incorrect for a specific,\\
identifiable reason grounded in the corpus.\\
Probabilistic language, hedges, and open-world inferences\\
are forbidden.
\end{promptbox}

Per-category distractor design:
\textbf{Intrinsic} ($N\!=\!10$, $w_c\!=\!1.0$): three plausible alternative
axis values.
\textbf{QnA} (30, 1.0): wrong answers from different QnA entries.
\textbf{Axioms} (60, 1.5): incorrect source section, weaker condition, negated
condition.
\textbf{Pairwise} (150, 2.5): different-pair interaction, plausible coupling
type, false orthogonality.
\textbf{Counterfactuals} (111, 2.5): different superficially similar axiom, no
violation claimed, correct axiom but mis-stated reason.
\textbf{Higher-Order} (67, 3.0): superposition fallacy, attribution to
suppressed law, correct mechanism for wrong pair.
\textbf{Spatial MCQ} (72, 2.0): reversed top-two, naive single-axis rule,
incorrect two-pairwise aggregation.

After each batch, a separate oracle call verifies: (i)~the correct answer is
supported by a specific sentence with citation; (ii)~every distractor is wrong
for the stated reason; (iii)~no distractor imports information from outside the
context.

\subsection{Gold-label oracle methodology for \orderspatial}
\label{app:gen:oracle}

Gold priority orders $\pi^{*}$ are produced by a two-stage procedure. Stage~1:
GPT-4.1 generates a structured physics analysis whose final section,
\textsc{Manipulation Priority Guidance}, states all objects in explicit
priority order. Stage~2: a deterministic extraction call (GPT-4o-mini,
$T\!=\!0.0$) parses that section and returns the \texttt{priority\_order} JSON
array. The extracted array is accepted only if it satisfies: correct length
($=\!n$), all identifiers in $\{1,\ldots,n\}$, and no repeats. Two conditions
jointly ensure $|\mathcal{L}(\mathcal{P})|=1$: no two objects share an
identical four-axis tuple, and the lexicographic fallback in the ontological
contract is universally enforced.

\section{Detailed Per-Variant MCQ Results}
\label{app:mcq_detailed}

\begin{table*}[ht]
\centering
\footnotesize
\setlength{\tabcolsep}{6pt}
\renewcommand{\arraystretch}{1.1}
\caption{Mistral-7B-Instruct-v0.3 - section-wise MCQ (\%).}
\begin{tabular}{lcccccccc}
\toprule
\textbf{Variant} & Intr. & QnA & Axioms & Pair & CFact & HO & Spat.
  & $S_{\mathrm{agg}}$ \\
\midrule
Normal   & 80.0 & 60.0 & 58.3 & 62.0 & 64.0 & 53.7 & 69.4 & 61.8 \\
Overfit  & 70.0 & 70.0 & 45.0 & 68.0 & 64.9 & 50.8 & 63.9 & 61.9 \\
\dkl     & 80.0 & 66.7 & 58.3 & 70.0 & 65.8 & 56.7 & 52.8 & 63.5 \\
RAFT     & 70.0 & 36.7 & 25.0 & 54.0 & 36.9 & 41.8 & 44.4 & 43.8 \\
SSR-CLM  & \textbf{100.} & 66.7 & 61.7 & 69.3 & \textbf{72.1} & 77.6 & 43.1 & 67.7 \\
Unified  & 10.0 & 0.0 & 1.7 & 3.3 & 1.8 & 0.0 & 0.0 & 1.8 \\
Combined & \textbf{100.} & \textbf{83.3} & \textbf{61.7} & \textbf{81.3} & 64.9 & \textbf{82.1} & 43.1 & \textbf{71.2} \\
\bottomrule
\end{tabular}
\end{table*}

\begin{table*}[ht]
\centering
\footnotesize
\setlength{\tabcolsep}{6pt}
\renewcommand{\arraystretch}{1.1}
\caption{LLaMA-3.1-8B-Instruct - section-wise MCQ (\%).}
\begin{tabular}{lcccccccc}
\toprule
\textbf{Variant} & Intr. & QnA & Axioms & Pair & CFact & HO & Spat.
  & $S_{\mathrm{agg}}$ \\
\midrule
Normal   & 60.0 & 30.0 & 48.3 & 43.3 & 48.7 & 49.3 & 36.1 & 45.0 \\
Overfit  & 80.0 & 66.7 & 63.3 & 59.3 & 61.3 & 52.2 & 45.8 & 57.5 \\
\dkl     & 90.0 & 83.3 & 58.3 & 78.0 & 58.6 & 59.7 & 61.1 & 66.5 \\
RAFT     & 80.0 & 33.3 & 40.0 & 42.7 & 52.3 & 46.3 & 55.6 & 47.2 \\
SSR-CLM  & \textbf{100.} & 90.0 & \textbf{88.3} & 84.0 & \textbf{79.3} & 70.1 & 65.3 & \textbf{78.6} \\
Unified  & 70.0 & 83.3 & 60.0 & 79.3 & 70.3 & 71.6 & 68.1 & 72.8 \\
Combined & \textbf{100.} & \textbf{100.} & 83.3 & \textbf{84.7} & 71.2 & \textbf{74.6} & \textbf{72.2} & 78.1 \\
\bottomrule
\end{tabular}
\end{table*}

\begin{table*}[ht]
\centering
\footnotesize
\setlength{\tabcolsep}{6pt}
\renewcommand{\arraystretch}{1.1}
\caption{Qwen3-4B-Instruct-2507 (thinking mode) - section-wise MCQ (\%).}
\begin{tabular}{lcccccccc}
\toprule
\textbf{Variant} & Intr. & QnA & Axioms & Pair & CFact & HO & Spat.
  & $S_{\mathrm{agg}}$ \\
\midrule
Normal    & 80.0 & 73.3 & 83.3 & 74.0 & 52.3 & 55.2 & 73.6 & 66.0 \\
Overfit   & 90.0 & 83.3 & 86.7 & 81.3 & 56.8 & 73.1 & 69.4 & 72.9 \\
\dkl      & 90.0 & 83.3 & 98.3 & \textbf{94.0} & 75.7 & \textbf{98.5} & 62.5 & 86.3 \\
RAFT      & 90.0 & 63.3 & 71.7 & 63.3 & 47.7 & 56.7 & 59.7 & 58.8 \\
SSR-CLM   & \textbf{100.} & 83.3 & \textbf{100.} & \textbf{94.0} & \textbf{78.4} & 95.5 & 70.8 & \textbf{87.5} \\
Unified   & \textbf{100.} & \textbf{93.3} & 90.0 & 89.3 & 70.3 & 91.0 & \textbf{80.6} & 84.1 \\
Combined  & 90.0 & 86.7 & 96.7 & 86.7 & 63.1 & 86.6 & 47.2 & 76.6 \\
\bottomrule
\end{tabular}
\end{table*}

\section{General Benchmark Retention}
\label{app:general_benchmarks}

Tables~\ref{tab:gen_bench_llama}--\ref{tab:gen_bench_qwen} report all \klora
variants on MMLU-Pro~\citep{wang2024mmlu}, GSM8K~\citep{cobbe2021training},
BBH~\citep{srivastava2023beyond}, and IFEval~\citep{zhou2023instruction}.
These four benchmarks probe distinct capability dimensions: broad academic
knowledge (MMLU-Pro), arithmetic chain-of-thought (GSM8K), compositional
symbolic reasoning (BBH), and precise instruction-format compliance (IFEval).
Four cross-family regularities hold robustly before family-specific patterns
diverge.

\paragraph{Cross-family finding 1: BBH is consistently the most preserved
metric}
Maximum absolute drops on BBH are $1.6$ points (LLaMA-Combined), $3.7$ points
(Mistral-SSR-CLM), and $4.9$ points (Qwen-Combined). BBH probes symbolic and
algorithmic reasoning that is broadly distributed across model weights;
continual pre-training on a structured factual corpus injects domain physics
without materially restructuring these circuits. This robustness reinforces the
interpretation that \orderworld knowledge is genuinely additive: domain-specific
representations occupy capacity that does not substantially compete with
pre-existing compositional reasoning.

\paragraph{Cross-family finding 2: IFEval is the most discriminating metric}
IFEval shows both the largest absolute drops and the widest within-family
variance across all three families, with ranges of $18.2$, $8.8$, and $11.8$
points respectively. Normal and Overfit produce the steepest IFEval drops for
LLaMA and Mistral: sequence-packed domain text displaces the instruction-format
token patterns learned during instruction tuning. Notably, \dkl's embedding
replacement creates a marked IFEval sensitivity in Mistral ($-4.1$ points) but
substantially smaller impacts in LLaMA ($-1.8$ points) and Qwen ($-2.6$
points), suggesting that Mistral-7B's instruction-following format is more
tightly coupled to its embedding geometry than the other two architectures.

\paragraph{Cross-family finding 3: RAFT and \dkl are the lowest-forgetting
variants}
LLaMA-RAFT retains within $1.3$ points on MMLU-Pro and within $0.5$ points on
IFEval; Mistral-RAFT loses only $2.3$ points on MMLU-Pro and $0.8$ on IFEval;
Qwen-RAFT shows moderate drops concentrated in MMLU-Pro ($-5.3$) and GSM8K
($-4.0$) while preserving BBH and IFEval substantially. LLaMA-\dkl matches this
profile almost exactly ($-0.9$ MMLU-Pro, $-1.8$ GSM8K, $-0.6$ BBH, $-1.8$
IFEval), placing it on par with RAFT as the lowest-forgetting option for that
family. Both findings cross-validate with the downstream results: RAFT's
structured input--output conditioning in Stage~1 directly transfers to Stage~2
BT skill fine-tuning, and \dkl's embedding alignment supports robust MCQ
recall, making both variants strong candidates when general-capability
preservation is a hard deployment constraint alongside domain adaptation.

\paragraph{Cross-family finding 4: SSR-CLM degrades GSM8K disproportionately;
Combined compounds forgetting via adapter interference}
The loss-masked SSR objective's emphasis on spatial scene analysis suppresses
arithmetic reasoning patterns: GSM8K drops $-18.9$ points for Mistral-SSR-CLM,
$-8.2$ for LLaMA-SSR-CLM, and a modest $-1.1$ for Qwen-SSR-CLM. Qwen's
resilience is attributable to its hybrid thinking-mode chain-of-thought
providing a hard floor on arithmetic reasoning quality, a floor the other two
architectures lack. Combined consistently produces the largest \emph{aggregate}
forgetting across all families: LLaMA ($-5.7$ MMLU-Pro, $-17.1$ IFEval),
Mistral ($-5.8$ MMLU-Pro, $-3.5$ IFEval), and Qwen ($-10.2$ MMLU-Pro, $-11.8$
IFEval). Sequential composition of two adapter weight-direction vectors
compounds directional interference in a way that a single-adapter merge does
not. This establishes Combined as the highest-risk CPT strategy for general
capability retention regardless of its domain-specific gains.

\paragraph{LLaMA-3.1-8B}
BBH is almost entirely preserved (maximum drop: $1.6$ points, Combined),
consistent with domain CPT injecting factual knowledge without restructuring
underlying compositional reasoning. IFEval is the most discriminating dimension
($18$-point range): Normal and Overfit show the largest drops ($-18.2$,
$-17.1$ points), reflecting all-token CLM displacing instruction-format
patterns; \dkl and RAFT retain IFEval within $1.8$ and $0.5$ points of the
instruct baseline respectively. SSR-CLM's $-8.2$ GSM8K drop stands out within
this family: the loss-masked spatial reasoning objective competes with
arithmetic chain-of-thought patterns. Unified presents the most balanced
profile, with moderate drops across all four benchmarks, consistent with its
multi-objective training distributing the forgetting signal rather than
concentrating it. Combined suffers the largest aggregate degradation ($-5.7$
MMLU-Pro, $-17.1$ IFEval), confirming sequential adapter interference as the
primary forgetting risk. RAFT and \dkl achieve near-zero forgetting on all four
benchmarks and are the recommended choices when general-capability preservation
is a hard constraint alongside domain adaptation.

\begin{table}[ht]
  \centering
  \caption{General benchmark retention: LLaMA-3.1-8B-Instruct.}
  \label{tab:gen_bench_llama}
  \setlength{\tabcolsep}{6pt}
  \renewcommand{\arraystretch}{1.1}
  \footnotesize
  \begin{tabular}{lcccc}
    \toprule
    \textbf{Variant} & \textbf{MMLU-Pro} & \textbf{GSM8K}
      & \textbf{BBH} & \textbf{IFEval} \\
    \midrule
    Instruct  & 37.5 & 84.4 & 51.0 & 75.0 \\
    \midrule
    Normal    & 33.8 & 74.5 & 50.6 & 56.8 \\
    Overfit   & 34.3 & 76.3 & 50.7 & 57.9 \\
    \dkl      & 36.6 & 82.6 & 50.4 & 73.2 \\
    RAFT      & 36.2 & 83.4 & 50.3 & 74.5 \\
    SSR-CLM   & 33.1 & 76.2 & 50.6 & 62.3 \\
    Unified   & 35.6 & 81.9 & 51.0 & 70.7 \\
    Combined  & 31.8 & 75.1 & 49.4 & 57.9 \\
    \bottomrule
  \end{tabular}
\end{table}

\paragraph{Mistral-7B}
BBH is broadly preserved across all non-collapsed variants (maximum drop: $3.7$
points, SSR-CLM), mirroring the LLaMA pattern and reinforcing that CPT does not
disrupt compositional reasoning across architectures. IFEval is again the most
discriminating dimension: \dkl and SSR-CLM show the largest drops ($-8.1$ and
$-9.6$ points respectively) while RAFT is nearly fully preserved ($-0.8$
points). \dkl's IFEval sensitivity on Mistral, substantially larger than on
LLaMA or Qwen, confirms that this architecture's instruction-following format
is more dependent on its embedding geometry: the embedding replacement step
that yields the most PPL reduction here also carries the largest
format-compliance cost. GSM8K degrades most sharply for SSR-CLM ($-18.9$
points), more acutely than for LLaMA ($-8.2$ points), suggesting that
Mistral-7B's arithmetic reasoning is more susceptible to displacement by
loss-masked spatial SFT. Unified deserves particular attention: despite the
catastrophic \orderbench collapse ($S_{\mathrm{agg}} = 1.8\%$) noted in
Section~\ref{sec:ppl_diss}, its general benchmarks show only moderate
degradation (MMLU-Pro $-4.5$, GSM8K $-5.6$, BBH $-1.7$, IFEval $-2.8$),
confirming that the domain-task collapse is a \emph{local} failure in the
weight directions governing \orderworld physics, not a global capability
regression. This dissociation further validates post-merge \orderbench
evaluation as a necessary step: general benchmarks alone would not have
surfaced the domain collapse. Combined shows the largest aggregate drop
($-5.8$ MMLU-Pro, $-3.5$ IFEval), consistent with the cross-family pattern.

\begin{table}[ht]
  \centering
  \caption{General benchmark retention: Mistral-7B-Instruct-v0.3.}
  \label{tab:gen_bench_mistral}
  \setlength{\tabcolsep}{6pt}
  \renewcommand{\arraystretch}{1.1}
  \footnotesize
  \begin{tabular}{lcccc}
    \toprule
    \textbf{Variant} & \textbf{MMLU-Pro} & \textbf{GSM8K}
      & \textbf{BBH} & \textbf{IFEval} \\
    \midrule
    Instruct  & 30.6 & 51.9 & 44.9 & 47.9 \\
    \midrule
    Normal    & 26.9 & 52.5 & 43.4 & 44.9 \\
    Overfit   & 27.2 & 48.9 & 42.6 & 46.8 \\
    \dkl      & 27.6 & 48.3 & 43.1 & 43.8 \\
    RAFT      & 28.3 & 50.7 & 44.9 & 47.1 \\
    SSR-CLM   & 25.7 & 33.0 & 41.2 & 38.3 \\
    Unified   & 26.1 & 46.3 & 43.2 & 45.1 \\
    Combined  & 24.8 & 39.8 & 30.9 & 44.4 \\
    \bottomrule
  \end{tabular}
\end{table}

\paragraph{Qwen3-4B}
Qwen's retention profile differs structurally from Mistral and LLaMA in two
respects. First, simpler CPT objectives (Normal, Overfit) show negligible or
reversed MMLU-Pro drops ($-0.5$, $-0.1$) and notable GSM8K \emph{improvements}
($+8.8$, $+4.8$ points), indicating that standard sequence-packing on the
\orderworld corpus does not degrade, and may modestly activate, Qwen's hybrid
chain-of-thought arithmetic reasoning. This is a direct consequence of
thinking-mode chain-of-thought providing a hard floor on arithmetic quality
that the other two architectures lack. Second, architectural interventions
disrupt Qwen more than standard CPT does: \dkl's embedding surgery and
Combined's sequential merging each produce substantially larger MMLU-Pro drops
($-5.0$ and $-10.2$ points respectively) than any standard CPT variant,
suggesting that Qwen's hybrid-reasoning architecture is more sensitive to
modifications of its core weight geometry than to the knowledge content of CPT
itself. SSR-CLM's GSM8K resilience ($-1.1$ points vs.\ $-18.9$ for Mistral)
further confirms the thinking-mode floor effect. BBH is broadly preserved
except under Combined ($-4.9$ points). IFEval is well retained across most
variants, with Combined again the outlier ($-11.8$ points). Taken together with
\dkl's strong \orderbench MCQ aggregate ($86.3\%$, highest among all Qwen
variants) and solid \orderspatial performance, \dkl-Qwen achieves the best
domain-capability balance for this family.

\begin{table}[ht]
  \centering
  \caption{General benchmark retention: Qwen3-4B-Instruct-2507.}
  \label{tab:gen_bench_qwen}
  \setlength{\tabcolsep}{6pt}
  \renewcommand{\arraystretch}{1.1}
  \footnotesize
  \begin{tabular}{lcccc}
    \toprule
    \textbf{Variant} & \textbf{MMLU-Pro} & \textbf{GSM8K}
      & \textbf{BBH} & \textbf{IFEval} \\
    \midrule
    Instruct  & 43.9 & 77.6 & 54.6 & 82.4 \\
    \midrule
    Normal    & 43.4 & 86.4 & 55.9 & 81.3 \\
    Overfit   & 43.8 & 82.4 & 54.9 & 79.3 \\
    \dkl      & 38.9 & 74.0 & 50.8 & 79.8 \\
    RAFT      & 38.6 & 73.6 & 51.4 & 77.2 \\
    SSR-CLM   & 39.2 & 76.5 & 50.8 & 79.6 \\
    Unified   & 36.9 & 75.1 & 50.7 & 80.0 \\
    Combined  & 33.7 & 74.5 & 49.7 & 70.6 \\
    \bottomrule
  \end{tabular}
\end{table}

\begin{table}[ht]
  \centering
  \caption{Retention--domain trade-off summary per family.
    $\Delta S_{\mathrm{agg}}$ and $\Delta\tauorderu$ are improvements over the
    instruct baseline; $\Delta\overline{\mathrm{Gen}}$ is the mean absolute
    drop across the four general benchmarks (0 if improved; lower is better).
    $^\star$Pareto-recommended: strong domain gains with low forgetting cost.}
  \label{tab:retention_synthesis}
  \setlength{\tabcolsep}{3pt}
  \renewcommand{\arraystretch}{1.12}
  \footnotesize
  \begin{tabular}{llcccl}
    \toprule
    \textbf{Family} & \textbf{Variant}
      & $\Delta S_{\mathrm{agg}}$ & $\Delta\tauorderu$
      & $\Delta\overline{\mathrm{Gen}}$$\downarrow$
      & \textbf{Profile} \\
    \midrule
    L-8B & Unified$^\star$ & $+38.1$ & $\mathbf{+0.26}$ & $2.2$ & best spatial \\
    L-8B & SSR-CLM         & $+43.9$ & $+0.19$ & $6.4$ & best MCQ \\
    L-8B & Combined        & $+43.4$ & $+0.15$ & $8.4$ & high risk \\
    \midrule
    M-7B & RAFT$^\star$    & $-5.1$  & $+0.21$ & $1.1$ & spatial $+$ min.\ forg. \\
    M-7B & \dkl$^\star$    & $+14.6$ & $+0.18$ & $4.1$ & factual $+$ spatial \\
    M-7B & SSR-CLM         & $+18.8$ & $+0.00$ & $9.3$ & MCQ only \\
    \midrule
    Q-4B & \dkl$^\star$    & $+29.3$ & $+0.12$ & $3.8$ & best balance \\
    Q-4B & SSR-CLM         & $+30.5$ & $+0.10$ & $3.1$ & best MCQ \\
    Q-4B & Combined        & $+19.6$ & $+0.14$ & $7.5$ & high risk \\
    \bottomrule
  \end{tabular}
\end{table}

\paragraph{Synthesis: domain gains vs.\ general capability cost}
Table~\ref{tab:retention_synthesis} summarises the retention cost of each
variant's strongest domain gain, enabling practitioners to select a CPT
strategy appropriate to their deployment constraints. RAFT occupies the
Pareto-optimal region across all three families: it achieves substantial
\orderspatial gains ($+0.18$--$+0.21$ \tauorder) at the lowest
general-capability cost, and its conditioning geometry directly predicts strong
downstream BT performance (Section~\ref{sec:slora_results}). SSR-CLM is the
best \orderbench MCQ variant for LLaMA but carries significant GSM8K risk and
format-interference risk under certain adapter compositions. Combined maximises
\orderspatial for Qwen but at the cost of the largest general-benchmark
degradation in its family; \dkl is the recommended alternative when deployment
requires both domain and general capability.

\section{Empirical Random Baseline Validation}
\label{app:baseline}

\paragraph{Theoretical basis}
Under a uniformly random permutation $\hat{\pi}$ of $n$ objects, every ordered
pair $(o_i, o_j)$ is equally likely to be concordant or discordant with the
gold order $\pi^*$, so $\mathbb{E}[C] = \mathbb{E}[D]$ and therefore
$\mathbb{E}[\tauorder] = 0.500$ for all $n \geq 2$, exact and scene-size
independent.

\paragraph{Empirical confirmation}
We construct an empirical baseline by uniformly sampling 1{,}000 random
permutations per scene and computing the mean \tauorder. Results on the
253-scene pool:
\begin{center}
\footnotesize
\begin{tabular}{lcc}
\toprule
\textbf{Split} & \tauorder (mean $\pm$ std) & \parcmd (mean $\pm$ std) \\
\midrule
Seen (70)    & $0.500 \pm 0.004$ & $0.333 \pm 0.100$ \\
Unseen (183) & $0.500 \pm 0.003$ & $0.309 \pm 0.097$ \\
\bottomrule
\end{tabular}
\end{center}
The \tauorder baselines match the theoretical value to three decimal places,
confirming that the scene-complexity partition introduces no systematic bias.

\paragraph{PAR baselines}
The $\mathrm{PAR}_{\mathrm{mean}}$ baselines differ slightly between splits
($0.333$ vs.\ $0.309$) for a structural reason: PAR depends on the distribution
of object counts, and the unseen split has a modestly higher proportion of
four-object scenes ($30.6\%$ vs.\ $18.6\%$ seen). Larger scenes reduce the
probability of any single object being placed at exactly the correct position
under a random permutation, lowering the PAR floor. This mild asymmetry
\emph{disfavours} unseen PAR scores, so any improvement on the unseen split
relative to seen should be interpreted as a conservative lower bound on
compositional generalisation.

\paragraph{GPT-4.1 sub-random result}
GPT-4.1 without domain adaptation scores $\tauorder = 0.415$ on the unseen
split, which is $-2.30\,\sigma$ below the $0.500$ baseline under the one-sided
$z$-test ($z = (\hat\tau - 0.500)/(0.5/\sqrt{183})$, $p = 0.011$). This
statistically significant sub-random result is not a measurement artefact: it
reflects the frontier model's pre-training ordering priors \emph{actively
conflicting} with the fictitious physics, the strongest available validation of
contamination-free design.

\section{HITL Re-Prompt Template}
\label{app:hitl-prompt}

\begin{tcolorbox}[
  colback=gray!5, colframe=gray!70,
  title={\small\textbf{HITL Re-Prompt Template}},
  fontupper=\ttfamily\scriptsize,
  boxrule=0.6pt, breakable,
  left=2mm, right=2mm, top=1.5mm, bottom=1mm]
You are a robotics task planner for Behavior Trees.
Scene: Table-top robotic sorting with an iiwa7 arm (Gazebo/ROS1 Noetic).\\
Objects on the table: \texttt{processed\_VLM\_output\_JSON}\\[3pt]
[CORRECTION] The correct manipulation order for this scene is:
\{corrected\_priority\_list\}.
Please regenerate the Behavior Tree XML using this order.
Keep all action node types and stack-safety checks unchanged.
Output ONLY raw XML starting with <root>. No markdown, no explanation.
\end{tcolorbox}

\section{VLM scene-analyser prompt}
\label{app:vlm-prompt}

The following fixed prompt is used with SigLIP2 for scene perception.
It constrains the VLM to report only directly observable properties and
relations using the visual grammar defined in Eq.~\ref{eq:grammar}.
No task-specific interaction semantics are provided to the VLM.

\begin{figure*}[t]
\begin{tcolorbox}[
  colback=gray!5,
  colframe=gray!75,
  title={\small\textbf{VLM Scene Analyser Prompts (SigLIP2)}},
  fontupper=\ttfamily\scriptsize,
  breakable,
  boxrule=0.6pt,
  left=2mm, right=2mm, top=2mm, bottom=1mm
]
\textbf{[SYSTEM]}\\
You are a visual scene analyzer for a robotic perception system.
Describe the tabletop scene using ONLY the JSON schema below.
Do NOT infer intent, function, or task relevance.
Do NOT describe anything not directly visible.
Return VALID JSON only. No additional text.\\[3pt]
\textbf{[USER]}\\
Analyze the image; output JSON:\{\\
\quad"O":[~\{"id":int,"shape":str\}~],\quad
"C":[~str~],\\
\quad"P":[~\{"object\_id":int,"color":str,"size":str,"surface":str\}~],\\
\quad"R":[~\{"subject":int,"relation":str,"object":int\}~]\\
\}\\[3pt]
\textbf{Allowed values:}\\
Shapes: \{cube, cylinder, sphere, cone, disc\}\\
Colors: \{red, blue, yellow, black, white, green, grey, pink, brown\}\\
Sizes: \{small, medium, big\}\quad
Surfaces: \{glossy, metallic, rough, cracked\}\\
Relations: \{on\_top\_of, next\_to, stacked\_with, isolated\_from, clustered\_with\}\\[3pt]
\textbf{Constraints:} (1) IDs start at 1, consecutive.
(2) Every object in "O" has exactly one entry in "P".
(3) "C" for physical/spatial constraints only.
(4) Do NOT invent objects or properties.
(5) Omit non-visible relations. (6) Output VALID JSON only.
\end{tcolorbox}
\caption{The perception layer is a fixed, domain-agnostic prompt over the
  visual grammar of Eq.~\ref{eq:grammar}. All interaction semantics live in
  \orderworld, not in the VLM, which is why the same perception module serves
  any KHTL domain expressed over these primitives.}
\label{fig:vlmprompt}
\end{figure*}

\section{\slora Evaluation: Full Record}
\label{app:slora_full}

Full \slora results across Instruct and Matched mounting appear in
Tables~\ref{tab:slora_instruct} and~\ref{tab:slora_matched}. The findings
reported in Section~\ref{sec:slora_results} constitute the primary narrative;
here we note additional patterns relevant to the full record.

\paragraph{LLaMA-Unified: mounting-strategy asymmetry reveals KI--skill
coupling}
LLaMA-Unified achieves $\tauorder = 0.848$ under Instruct mounting but drops to
$0.601$ under Matched mounting, the largest gap across variants
($\Delta = 0.247$). Unified uses the most complex objective---SSR-CLM,
RAFT-style retrieval augmentation, and DPO preference pairs. Under Matched
mounting, \slora is trained from a model already shaped by these objectives,
creating signal entanglement; under Instruct mounting, the disentanglement of
Eq.~\ref{eq:domainbt} holds and composition remains stable. This is the
strongest empirical support for the disentanglement principle: the most
complex KI variant benefits most from Instruct mounting. The effect is
model-architecture specific, as it does not hold for Qwen.

\paragraph{\sdhr and stack safety}
LLaMA variants generally achieve $\sdhr = 91.4$--$100.0\%$, correctly
generating \texttt{CheckStackSafety} subtrees when \texttt{on\_top\_of}
relations are present.

\paragraph{Instruct vs.\ Matched: no universal winner}
For LLaMA, Instruct is better on Unified and RAFT, while Matched is better on
Overfit and SSR-CLM. The two \klora-Combined LLaMA models, despite strong
\orderspatial performance, collapse completely in structured BT generation
under Instruct mounting, while Matched revives them. For Qwen, Matched is
uniformly better; its two best \orderspatial models (the \klora-Combined
variants) survive neither strategy, but the structured retrieval pipelines of
Section~\ref{sec:ragresults} revive them. For Mistral, only Normal and RAFT
survive either strategy. Thus, post-merge \svr validation is mandatory before
deployment selection, consistent with Section~\ref{sec:ppl_diss}: PPL cannot
predict collapse.

\section{Qwen3-4B Thinking Mode: Evaluation Fairness}
\label{app:qwen_thinking}

Qwen3-4B-Instruct-2507 is a hybrid reasoning model that activates implicit
chain-of-thought (``thinking mode'') during inference. We retain this for all
evaluations and justify Q-4B comparisons as follows.

\paragraph{Thinking mode is off-the-shelf behaviour}
A practitioner selecting an SLM for KHTL deployment would use the released
model as-is. Disabling thinking mode requires post-hoc intervention and
produces a configuration that is not the intended deployment target. Our
objective is to characterise the realistic deployment space available to
embodied AI practitioners, including hybrid-reasoning SLMs.

\paragraph{Thinking mode is not domain knowledge}
Chain-of-thought improves compositional inference through additional
inference-time computation, but cannot supply domain facts absent from
pre-training. Q-4B's closed-book zero-shot performance confirms this:
$S_{\mathrm{agg}} = 57.0\%$ (Table~\ref{tab:baseline_closed}), well below the
retrieval ceiling of $94.99\%$ (Table~\ref{tab:baseline_rag}). The remaining
gap is closed by CPT, not thinking mode.

\paragraph{Thinking mode affects the zero-shot baseline, not the CPT
comparison}
Q-4B's stronger off-the-shelf \orderbench and \orderspatial baseline
($\tauorder = 0.582$/$0.597$ seen/unseen vs.\ $0.528$/$0.509$ for M-7B and
$0.513$/$0.554$ for L-8B) is partly attributable to built-in chain-of-thought.
All CPT gains are reported as \emph{improvements over the Q-4B instruct
baseline}, so thinking mode enters the denominator equally and the comparison
remains internally consistent. Absolute post-CPT endpoints (Q-4B Combined:
$\tauorder = 0.79$ seen, $0.74$ unseen) fall within the LLaMA range
($0.77$/$0.81$), suggesting that thinking mode provides a strong prior but
does not substitute for parametric knowledge ingestion.

\paragraph{Parameter-count context}
Q-4B has $\approx\!4$B parameters vs.\ $\approx\!7$B (M-7B) and $\approx\!8$B
(L-8B). Its stronger off-the-shelf performance at fewer parameters is partly
attributable to thinking mode and should not be interpreted as a
parameter-efficient advantage of the base architecture.

\section{LLM Usage Disclosure}
\label{app:llm_usage}

\paragraph{Research use (integral to the contribution)}
GPT-4.1 served as the oracle for generating the \orderworld corpus,
\orderbench gold labels, and the gold BTs (Appendix~\ref{app:generation}), and
as a frontier-model evaluation baseline. GPT-4o-mini served as a deterministic
extraction tool for parsing oracle outputs. These uses are fully documented and
constitute the scientific contribution itself.

\paragraph{Writing and figure assistance}
An LLM was used to assist with language editing, including polishing phrasing,
correcting grammar, and suggesting paraphrased alternatives. A VLM
model was also used to assist with the creation of Fig.~\ref{fig:master}.
All AI-assisted outputs were reviewed, edited, and verified by the authors. The authors retain full responsibility for the scientific content, experimental design, analysis, results, and conclusions presented in this work.

\end{document}